\definecolor{Gray}{gray}{0.85}
\newcolumntype{g}{>{\columncolor{Gray}}c}
\newcolumntype{w}{>{\columncolor{white}}c}
\theoremstyle{plain}\newtheorem{thm}{Theorem}
\theoremstyle{definition}
\theoremstyle{plain}
\theoremstyle{plain}
\newcommand{\algname}{{\sc strips}tream}
\newcommand{\strips}{{\sc strips}}
\newcommand{\pddl}{{\sc pddl}}
\newcommand{\eager}{incremental}
\newcommand{\Eager}{Incremental}
\newcommand{\focused}{focused}
\def\PYG@reset{\let\PYG@it=\relax \let\PYG@bf=\relax%
    \let\PYG@ul=\relax \let\PYG@tc=\relax%
    \let\PYG@bc=\relax \let\PYG@ff=\relax}
\def\PYG@tok#1{\csname PYG@tok@#1\endcsname}
\def\PYG@toks#1+{\ifx\relax#1\empty\else%
    \PYG@tok{#1}\expandafter\PYG@toks\fi}
\def\PYG@do#1{\PYG@bc{\PYG@tc{\PYG@ul{%
    \PYG@it{\PYG@bf{\PYG@ff{#1}}}}}}}
\def\PYG#1#2{\PYG@reset\PYG@toks#1+\relax+\PYG@do{#2}}
\def\csname PYG@tok@gd\endcsname{\def\PYG@tc##1{\textcolor[rgb]{0.63,0.00,0.00}{##1}}}
\def\csname PYG@tok@gu\endcsname{\let\PYG@bf=\textbf\def\PYG@tc##1{\textcolor[rgb]{0.50,0.00,0.50}{##1}}}
\def\csname PYG@tok@gt\endcsname{\def\PYG@tc##1{\textcolor[rgb]{0.00,0.27,0.87}{##1}}}
\def\csname PYG@tok@gs\endcsname{\let\PYG@bf=\textbf}
\def\csname PYG@tok@gr\endcsname{\def\PYG@tc##1{\textcolor[rgb]{1.00,0.00,0.00}{##1}}}
\def\csname PYG@tok@cm\endcsname{\let\PYG@it=\textit\def\PYG@tc##1{\textcolor[rgb]{0.25,0.50,0.50}{##1}}}
\def\csname PYG@tok@vg\endcsname{\def\PYG@tc##1{\textcolor[rgb]{0.10,0.09,0.49}{##1}}}
\def\csname PYG@tok@vi\endcsname{\def\PYG@tc##1{\textcolor[rgb]{0.10,0.09,0.49}{##1}}}
\def\csname PYG@tok@mh\endcsname{\def\PYG@tc##1{\textcolor[rgb]{0.40,0.40,0.40}{##1}}}
\def\csname PYG@tok@cs\endcsname{\let\PYG@it=\textit\def\PYG@tc##1{\textcolor[rgb]{0.25,0.50,0.50}{##1}}}
\def\csname PYG@tok@ge\endcsname{\let\PYG@it=\textit}
\def\csname PYG@tok@vc\endcsname{\def\PYG@tc##1{\textcolor[rgb]{0.10,0.09,0.49}{##1}}}
\def\csname PYG@tok@il\endcsname{\def\PYG@tc##1{\textcolor[rgb]{0.40,0.40,0.40}{##1}}}
\def\csname PYG@tok@go\endcsname{\def\PYG@tc##1{\textcolor[rgb]{0.53,0.53,0.53}{##1}}}
\def\csname PYG@tok@cp\endcsname{\def\PYG@tc##1{\textcolor[rgb]{0.74,0.48,0.00}{##1}}}
\def\csname PYG@tok@gi\endcsname{\def\PYG@tc##1{\textcolor[rgb]{0.00,0.63,0.00}{##1}}}
\def\csname PYG@tok@gh\endcsname{\let\PYG@bf=\textbf\def\PYG@tc##1{\textcolor[rgb]{0.00,0.00,0.50}{##1}}}
\def\csname PYG@tok@ni\endcsname{\let\PYG@bf=\textbf\def\PYG@tc##1{\textcolor[rgb]{0.60,0.60,0.60}{##1}}}
\def\csname PYG@tok@nl\endcsname{\def\PYG@tc##1{\textcolor[rgb]{0.63,0.63,0.00}{##1}}}
\def\csname PYG@tok@nn\endcsname{\let\PYG@bf=\textbf\def\PYG@tc##1{\textcolor[rgb]{0.00,0.00,1.00}{##1}}}
\def\csname PYG@tok@no\endcsname{\def\PYG@tc##1{\textcolor[rgb]{0.53,0.00,0.00}{##1}}}
\def\csname PYG@tok@na\endcsname{\def\PYG@tc##1{\textcolor[rgb]{0.49,0.56,0.16}{##1}}}
\def\csname PYG@tok@nb\endcsname{\def\PYG@tc##1{\textcolor[rgb]{0.00,0.50,0.00}{##1}}}
\def\csname PYG@tok@nc\endcsname{\let\PYG@bf=\textbf\def\PYG@tc##1{\textcolor[rgb]{0.00,0.00,1.00}{##1}}}
\def\csname PYG@tok@nd\endcsname{\def\PYG@tc##1{\textcolor[rgb]{0.67,0.13,1.00}{##1}}}
\def\csname PYG@tok@ne\endcsname{\let\PYG@bf=\textbf\def\PYG@tc##1{\textcolor[rgb]{0.82,0.25,0.23}{##1}}}
\def\csname PYG@tok@nf\endcsname{\def\PYG@tc##1{\textcolor[rgb]{0.00,0.00,1.00}{##1}}}
\def\csname PYG@tok@si\endcsname{\let\PYG@bf=\textbf\def\PYG@tc##1{\textcolor[rgb]{0.73,0.40,0.53}{##1}}}
\def\csname PYG@tok@s2\endcsname{\def\PYG@tc##1{\textcolor[rgb]{0.73,0.13,0.13}{##1}}}
\def\csname PYG@tok@nt\endcsname{\let\PYG@bf=\textbf\def\PYG@tc##1{\textcolor[rgb]{0.00,0.50,0.00}{##1}}}
\def\csname PYG@tok@nv\endcsname{\def\PYG@tc##1{\textcolor[rgb]{0.10,0.09,0.49}{##1}}}
\def\csname PYG@tok@s1\endcsname{\def\PYG@tc##1{\textcolor[rgb]{0.73,0.13,0.13}{##1}}}
\def\csname PYG@tok@ch\endcsname{\let\PYG@it=\textit\def\PYG@tc##1{\textcolor[rgb]{0.25,0.50,0.50}{##1}}}
\def\csname PYG@tok@m\endcsname{\def\PYG@tc##1{\textcolor[rgb]{0.40,0.40,0.40}{##1}}}
\def\csname PYG@tok@gp\endcsname{\let\PYG@bf=\textbf\def\PYG@tc##1{\textcolor[rgb]{0.00,0.00,0.50}{##1}}}
\def\csname PYG@tok@sh\endcsname{\def\PYG@tc##1{\textcolor[rgb]{0.73,0.13,0.13}{##1}}}
\def\csname PYG@tok@ow\endcsname{\let\PYG@bf=\textbf\def\PYG@tc##1{\textcolor[rgb]{0.67,0.13,1.00}{##1}}}
\def\csname PYG@tok@sx\endcsname{\def\PYG@tc##1{\textcolor[rgb]{0.00,0.50,0.00}{##1}}}
\def\csname PYG@tok@bp\endcsname{\def\PYG@tc##1{\textcolor[rgb]{0.00,0.50,0.00}{##1}}}
\def\csname PYG@tok@c1\endcsname{\let\PYG@it=\textit\def\PYG@tc##1{\textcolor[rgb]{0.25,0.50,0.50}{##1}}}
\def\csname PYG@tok@o\endcsname{\def\PYG@tc##1{\textcolor[rgb]{0.40,0.40,0.40}{##1}}}
\def\csname PYG@tok@kc\endcsname{\let\PYG@bf=\textbf\def\PYG@tc##1{\textcolor[rgb]{0.00,0.50,0.00}{##1}}}
\def\csname PYG@tok@c\endcsname{\let\PYG@it=\textit\def\PYG@tc##1{\textcolor[rgb]{0.25,0.50,0.50}{##1}}}
\def\csname PYG@tok@mf\endcsname{\def\PYG@tc##1{\textcolor[rgb]{0.40,0.40,0.40}{##1}}}
\def\csname PYG@tok@err\endcsname{\def\PYG@bc##1{\setlength{\fboxsep}{0pt}\fcolorbox[rgb]{1.00,0.00,0.00}{1,1,1}{\strut ##1}}}
\def\csname PYG@tok@mb\endcsname{\def\PYG@tc##1{\textcolor[rgb]{0.40,0.40,0.40}{##1}}}
\def\csname PYG@tok@ss\endcsname{\def\PYG@tc##1{\textcolor[rgb]{0.10,0.09,0.49}{##1}}}
\def\csname PYG@tok@sr\endcsname{\def\PYG@tc##1{\textcolor[rgb]{0.73,0.40,0.53}{##1}}}
\def\csname PYG@tok@mo\endcsname{\def\PYG@tc##1{\textcolor[rgb]{0.40,0.40,0.40}{##1}}}
\def\csname PYG@tok@kd\endcsname{\let\PYG@bf=\textbf\def\PYG@tc##1{\textcolor[rgb]{0.00,0.50,0.00}{##1}}}
\def\csname PYG@tok@mi\endcsname{\def\PYG@tc##1{\textcolor[rgb]{0.40,0.40,0.40}{##1}}}
\def\csname PYG@tok@kn\endcsname{\let\PYG@bf=\textbf\def\PYG@tc##1{\textcolor[rgb]{0.00,0.50,0.00}{##1}}}
\def\csname PYG@tok@cpf\endcsname{\let\PYG@it=\textit\def\PYG@tc##1{\textcolor[rgb]{0.25,0.50,0.50}{##1}}}
\def\csname PYG@tok@kr\endcsname{\let\PYG@bf=\textbf\def\PYG@tc##1{\textcolor[rgb]{0.00,0.50,0.00}{##1}}}
\def\csname PYG@tok@s\endcsname{\def\PYG@tc##1{\textcolor[rgb]{0.73,0.13,0.13}{##1}}}
\def\csname PYG@tok@kp\endcsname{\def\PYG@tc##1{\textcolor[rgb]{0.00,0.50,0.00}{##1}}}
\def\csname PYG@tok@w\endcsname{\def\PYG@tc##1{\textcolor[rgb]{0.73,0.73,0.73}{##1}}}
\def\csname PYG@tok@kt\endcsname{\def\PYG@tc##1{\textcolor[rgb]{0.69,0.00,0.25}{##1}}}
\def\csname PYG@tok@sc\endcsname{\def\PYG@tc##1{\textcolor[rgb]{0.73,0.13,0.13}{##1}}}
\def\csname PYG@tok@sb\endcsname{\def\PYG@tc##1{\textcolor[rgb]{0.73,0.13,0.13}{##1}}}
\def\csname PYG@tok@k\endcsname{\let\PYG@bf=\textbf\def\PYG@tc##1{\textcolor[rgb]{0.00,0.50,0.00}{##1}}}
\def\csname PYG@tok@se\endcsname{\let\PYG@bf=\textbf\def\PYG@tc##1{\textcolor[rgb]{0.73,0.40,0.13}{##1}}}
\def\csname PYG@tok@sd\endcsname{\let\PYG@it=\textit\def\PYG@tc##1{\textcolor[rgb]{0.73,0.13,0.13}{##1}}}
\def\PYGdefault@reset{\let\PYGdefault@it=\relax \let\PYGdefault@bf=\relax%
    \let\PYGdefault@ul=\relax \let\PYGdefault@tc=\relax%
    \let\PYGdefault@bc=\relax \let\PYGdefault@ff=\relax}
\def\PYGdefault@tok#1{\csname PYGdefault@tok@#1\endcsname}
\def\PYGdefault@toks#1+{\ifx\relax#1\empty\else%
    \PYGdefault@tok{#1}\expandafter\PYGdefault@toks\fi}
\def\PYGdefault@do#1{\PYGdefault@bc{\PYGdefault@tc{\PYGdefault@ul{%
    \PYGdefault@it{\PYGdefault@bf{\PYGdefault@ff{#1}}}}}}}
\def\PYGdefault#1#2{\PYGdefault@reset\PYGdefault@toks#1+\relax+\PYGdefault@do{#2}}
\def\csname PYGdefault@tok@gd\endcsname{\def\PYGdefault@tc##1{\textcolor[rgb]{0.63,0.00,0.00}{##1}}}
\def\csname PYGdefault@tok@gu\endcsname{\let\PYGdefault@bf=\textbf\def\PYGdefault@tc##1{\textcolor[rgb]{0.50,0.00,0.50}{##1}}}
\def\csname PYGdefault@tok@gt\endcsname{\def\PYGdefault@tc##1{\textcolor[rgb]{0.00,0.27,0.87}{##1}}}
\def\csname PYGdefault@tok@gs\endcsname{\let\PYGdefault@bf=\textbf}
\def\csname PYGdefault@tok@gr\endcsname{\def\PYGdefault@tc##1{\textcolor[rgb]{1.00,0.00,0.00}{##1}}}
\def\csname PYGdefault@tok@cm\endcsname{\let\PYGdefault@it=\textit\def\PYGdefault@tc##1{\textcolor[rgb]{0.25,0.50,0.50}{##1}}}
\def\csname PYGdefault@tok@vg\endcsname{\def\PYGdefault@tc##1{\textcolor[rgb]{0.10,0.09,0.49}{##1}}}
\def\csname PYGdefault@tok@vi\endcsname{\def\PYGdefault@tc##1{\textcolor[rgb]{0.10,0.09,0.49}{##1}}}
\def\csname PYGdefault@tok@mh\endcsname{\def\PYGdefault@tc##1{\textcolor[rgb]{0.40,0.40,0.40}{##1}}}
\def\csname PYGdefault@tok@cs\endcsname{\let\PYGdefault@it=\textit\def\PYGdefault@tc##1{\textcolor[rgb]{0.25,0.50,0.50}{##1}}}
\def\csname PYGdefault@tok@ge\endcsname{\let\PYGdefault@it=\textit}
\def\csname PYGdefault@tok@vc\endcsname{\def\PYGdefault@tc##1{\textcolor[rgb]{0.10,0.09,0.49}{##1}}}
\def\csname PYGdefault@tok@il\endcsname{\def\PYGdefault@tc##1{\textcolor[rgb]{0.40,0.40,0.40}{##1}}}
\def\csname PYGdefault@tok@go\endcsname{\def\PYGdefault@tc##1{\textcolor[rgb]{0.53,0.53,0.53}{##1}}}
\def\csname PYGdefault@tok@cp\endcsname{\def\PYGdefault@tc##1{\textcolor[rgb]{0.74,0.48,0.00}{##1}}}
\def\csname PYGdefault@tok@gi\endcsname{\def\PYGdefault@tc##1{\textcolor[rgb]{0.00,0.63,0.00}{##1}}}
\def\csname PYGdefault@tok@gh\endcsname{\let\PYGdefault@bf=\textbf\def\PYGdefault@tc##1{\textcolor[rgb]{0.00,0.00,0.50}{##1}}}
\def\csname PYGdefault@tok@ni\endcsname{\let\PYGdefault@bf=\textbf\def\PYGdefault@tc##1{\textcolor[rgb]{0.60,0.60,0.60}{##1}}}
\def\csname PYGdefault@tok@nl\endcsname{\def\PYGdefault@tc##1{\textcolor[rgb]{0.63,0.63,0.00}{##1}}}
\def\csname PYGdefault@tok@nn\endcsname{\let\PYGdefault@bf=\textbf\def\PYGdefault@tc##1{\textcolor[rgb]{0.00,0.00,1.00}{##1}}}
\def\csname PYGdefault@tok@no\endcsname{\def\PYGdefault@tc##1{\textcolor[rgb]{0.53,0.00,0.00}{##1}}}
\def\csname PYGdefault@tok@na\endcsname{\def\PYGdefault@tc##1{\textcolor[rgb]{0.49,0.56,0.16}{##1}}}
\def\csname PYGdefault@tok@nb\endcsname{\def\PYGdefault@tc##1{\textcolor[rgb]{0.00,0.50,0.00}{##1}}}
\def\csname PYGdefault@tok@nc\endcsname{\let\PYGdefault@bf=\textbf\def\PYGdefault@tc##1{\textcolor[rgb]{0.00,0.00,1.00}{##1}}}
\def\csname PYGdefault@tok@nd\endcsname{\def\PYGdefault@tc##1{\textcolor[rgb]{0.67,0.13,1.00}{##1}}}
\def\csname PYGdefault@tok@ne\endcsname{\let\PYGdefault@bf=\textbf\def\PYGdefault@tc##1{\textcolor[rgb]{0.82,0.25,0.23}{##1}}}
\def\csname PYGdefault@tok@nf\endcsname{\def\PYGdefault@tc##1{\textcolor[rgb]{0.00,0.00,1.00}{##1}}}
\def\csname PYGdefault@tok@si\endcsname{\let\PYGdefault@bf=\textbf\def\PYGdefault@tc##1{\textcolor[rgb]{0.73,0.40,0.53}{##1}}}
\def\csname PYGdefault@tok@s2\endcsname{\def\PYGdefault@tc##1{\textcolor[rgb]{0.73,0.13,0.13}{##1}}}
\def\csname PYGdefault@tok@nt\endcsname{\let\PYGdefault@bf=\textbf\def\PYGdefault@tc##1{\textcolor[rgb]{0.00,0.50,0.00}{##1}}}
\def\csname PYGdefault@tok@nv\endcsname{\def\PYGdefault@tc##1{\textcolor[rgb]{0.10,0.09,0.49}{##1}}}
\def\csname PYGdefault@tok@s1\endcsname{\def\PYGdefault@tc##1{\textcolor[rgb]{0.73,0.13,0.13}{##1}}}
\def\csname PYGdefault@tok@ch\endcsname{\let\PYGdefault@it=\textit\def\PYGdefault@tc##1{\textcolor[rgb]{0.25,0.50,0.50}{##1}}}
\def\csname PYGdefault@tok@m\endcsname{\def\PYGdefault@tc##1{\textcolor[rgb]{0.40,0.40,0.40}{##1}}}
\def\csname PYGdefault@tok@gp\endcsname{\let\PYGdefault@bf=\textbf\def\PYGdefault@tc##1{\textcolor[rgb]{0.00,0.00,0.50}{##1}}}
\def\csname PYGdefault@tok@sh\endcsname{\def\PYGdefault@tc##1{\textcolor[rgb]{0.73,0.13,0.13}{##1}}}
\def\csname PYGdefault@tok@ow\endcsname{\let\PYGdefault@bf=\textbf\def\PYGdefault@tc##1{\textcolor[rgb]{0.67,0.13,1.00}{##1}}}
\def\csname PYGdefault@tok@sx\endcsname{\def\PYGdefault@tc##1{\textcolor[rgb]{0.00,0.50,0.00}{##1}}}
\def\csname PYGdefault@tok@bp\endcsname{\def\PYGdefault@tc##1{\textcolor[rgb]{0.00,0.50,0.00}{##1}}}
\def\csname PYGdefault@tok@c1\endcsname{\let\PYGdefault@it=\textit\def\PYGdefault@tc##1{\textcolor[rgb]{0.25,0.50,0.50}{##1}}}
\def\csname PYGdefault@tok@o\endcsname{\def\PYGdefault@tc##1{\textcolor[rgb]{0.40,0.40,0.40}{##1}}}
\def\csname PYGdefault@tok@kc\endcsname{\let\PYGdefault@bf=\textbf\def\PYGdefault@tc##1{\textcolor[rgb]{0.00,0.50,0.00}{##1}}}
\def\csname PYGdefault@tok@c\endcsname{\let\PYGdefault@it=\textit\def\PYGdefault@tc##1{\textcolor[rgb]{0.25,0.50,0.50}{##1}}}
\def\csname PYGdefault@tok@mf\endcsname{\def\PYGdefault@tc##1{\textcolor[rgb]{0.40,0.40,0.40}{##1}}}
\def\csname PYGdefault@tok@err\endcsname{\def\PYGdefault@bc##1{\setlength{\fboxsep}{0pt}\fcolorbox[rgb]{1.00,0.00,0.00}{1,1,1}{\strut ##1}}}
\def\csname PYGdefault@tok@mb\endcsname{\def\PYGdefault@tc##1{\textcolor[rgb]{0.40,0.40,0.40}{##1}}}
\def\csname PYGdefault@tok@ss\endcsname{\def\PYGdefault@tc##1{\textcolor[rgb]{0.10,0.09,0.49}{##1}}}
\def\csname PYGdefault@tok@sr\endcsname{\def\PYGdefault@tc##1{\textcolor[rgb]{0.73,0.40,0.53}{##1}}}
\def\csname PYGdefault@tok@mo\endcsname{\def\PYGdefault@tc##1{\textcolor[rgb]{0.40,0.40,0.40}{##1}}}
\def\csname PYGdefault@tok@kd\endcsname{\let\PYGdefault@bf=\textbf\def\PYGdefault@tc##1{\textcolor[rgb]{0.00,0.50,0.00}{##1}}}
\def\csname PYGdefault@tok@mi\endcsname{\def\PYGdefault@tc##1{\textcolor[rgb]{0.40,0.40,0.40}{##1}}}
\def\csname PYGdefault@tok@kn\endcsname{\let\PYGdefault@bf=\textbf\def\PYGdefault@tc##1{\textcolor[rgb]{0.00,0.50,0.00}{##1}}}
\def\csname PYGdefault@tok@cpf\endcsname{\let\PYGdefault@it=\textit\def\PYGdefault@tc##1{\textcolor[rgb]{0.25,0.50,0.50}{##1}}}
\def\csname PYGdefault@tok@kr\endcsname{\let\PYGdefault@bf=\textbf\def\PYGdefault@tc##1{\textcolor[rgb]{0.00,0.50,0.00}{##1}}}
\def\csname PYGdefault@tok@s\endcsname{\def\PYGdefault@tc##1{\textcolor[rgb]{0.73,0.13,0.13}{##1}}}
\def\csname PYGdefault@tok@kp\endcsname{\def\PYGdefault@tc##1{\textcolor[rgb]{0.00,0.50,0.00}{##1}}}
\def\csname PYGdefault@tok@w\endcsname{\def\PYGdefault@tc##1{\textcolor[rgb]{0.73,0.73,0.73}{##1}}}
\def\csname PYGdefault@tok@kt\endcsname{\def\PYGdefault@tc##1{\textcolor[rgb]{0.69,0.00,0.25}{##1}}}
\def\csname PYGdefault@tok@sc\endcsname{\def\PYGdefault@tc##1{\textcolor[rgb]{0.73,0.13,0.13}{##1}}}
\def\csname PYGdefault@tok@sb\endcsname{\def\PYGdefault@tc##1{\textcolor[rgb]{0.73,0.13,0.13}{##1}}}
\def\csname PYGdefault@tok@k\endcsname{\let\PYGdefault@bf=\textbf\def\PYGdefault@tc##1{\textcolor[rgb]{0.00,0.50,0.00}{##1}}}
\def\csname PYGdefault@tok@se\endcsname{\let\PYGdefault@bf=\textbf\def\PYGdefault@tc##1{\textcolor[rgb]{0.73,0.40,0.13}{##1}}}
\def\csname PYGdefault@tok@sd\endcsname{\let\PYGdefault@it=\textit\def\PYGdefault@tc##1{\textcolor[rgb]{0.73,0.13,0.13}{##1}}}
\begin{document}
%
\title{STRIPS Planning in Infinite Domains}

 \pdfinfo{
 /Title (STRIPS Planning in Infinite Domains)
 /Author (Caelan Reed Garrett, Tom\'as Lozano-P\'erez, and Leslie Pack
 Kaelbling)}
 
 \author{Caelan Reed Garrett, Tom\'as Lozano-P\'erez, and Leslie Pack Kaelbling \\
 MIT CSAIL\\
 32 Vassar Street \\
 Cambridge, MA 02139, USA\\
 {\tt\small \{caelan,tlp,lpk\}@csail.mit.edu}}

\maketitle


\begin{abstract}
\begin{quote}  
  Many robotic planning applications involve continuous
  actions with highly non-linear constraints, which
  cannot be modeled using modern planners that construct a
  propositional representation.  We introduce \algname{}: an
  extension of the \strips{} language which can model these domains
  by supporting the specification of blackbox generators to handle complex constraints. 
  The outputs of these generators interact with actions through possibly infinite streams of
  objects and static predicates. 
  We provide two algorithms which both
  reduce \algname{} problems to a sequence of finite-domain planning
  problems. The representation and algorithms are entirely
  domain independent.  We demonstrate our framework on simple illustrative
  domains, and then on a high-dimensional, continuous robotic task
  and motion planning domain.  
\end{quote}
\end{abstract}


\section{Introduction}

Many important planning domains naturally occur in continuous spaces involving
complex constraints among variables. Consider planning for a robot tasked with
organizing several blocks in a room. 
The robot must find a sequence of {\em pick}, {\em place}, and {\em move} actions
involving continuous robot configurations, robot trajectories, block poses, and block grasps.
These variables must satisfy highly non-linear kinematic, collision, and motion constraints 
which affect the feasibility of the actions. 
Each constraint typically requires a special purpose procedure to efficiently evaluate it or produce satisfying values for it such as an inverse kinematic solver, collision checker, or motion planner. 


We propose an approach, called \algname{}, which can model such a
domain by providing a generic interface for blackbox procedures to be incorporated
in an action language. The implementation of the procedures is abstracted away using
{\em streams}: finite or infinite sequences of objects such as poses, configurations, and trajectories.
We introduce the following 
two additional stream capabilities to effectively model
domains with complex predicates that are only true for small sets 
of their argument values:
\begin{itemize}
\item {\bf conditional streams}:  a stream of objects may be defined
  as a function of other objects; for example, a stream of possible
  positions of one object given the position of another object that it
  must be on top of or a stream of possible settings of parameters of
  a factory machine given desired properties of its output.
\item {\bf certified streams}: streams of objects may be declared not
  only to be of a specific type, but also to satisfy an arbitrary
  conjunction of predicates;  for example, one might define a
  certified conditional stream that generates positions for an object
  that satisfy requirements that the object be on a surface, that a
  robot be able to reach the object at that position, and that the
  robot be able to see the object while reaching.   
\end{itemize}

\begin{figure}[ht]
\centering
\includegraphics[width=0.4\textwidth]{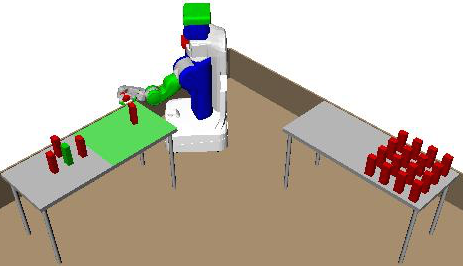}
\caption{Problem 2-16.} 
\label{fig:tamp_domains}
\end{figure}

Through streams, \algname{} can compactly model
a large class of continuous, countably infinite, and large finite domains.
By conditioning on partial argument values
and using sampling, it can even effectively
model domains where the set of valid action argument values is lower
dimensional than the possible argument space. For example, in our robotics domain, the set of 
inverse kinematics solutions for a particular pose and grasp is much lower dimensional than the full set of robot configurations. However, using a conditional
stream, we can specify an inverse kinematics solver which directly samples from this
set given a pose and grasp.


The approach is entirely domain-independent, and reduces to \strips{} in
the case of finite domains.  The only additional requirement is the
specification of a set of streams that can generate objects satisfying
the static predicates in the domain. 
It is accompanied by two algorithms, a simple
and a focused version, which operate by constructing and solving a
sequence of \strips{} planning problems.  
This strategy takes advantage
of the highly optimized search strategies and heuristics that exist
for \strips{} planning, while expanding the domain of
applicability of those techniques.
Additionally, the focused version can efficiently solve problems where using streams is computationally expensive by carefully choosing to only call potentially useful streams. 

\section{Related work}
There are a number of existing general-purpose approaches to solving planning problems in infinite domains, each of which has some significant limitation when modeling our robot domain.


Temporal planning, such as defined in {\sc pddl2.1}~\cite{Fox03pddl2.1:an}, is often formulated in terms of linear constraints on plan variables and is typically solved using techniques based on linear programming~\cite{hoffmann2003metric,coles2013hybrid}. 
{\sc pddl+}~\cite{fox2006modelling} extends {\sc pddl2.1} by introducing exogenous events and continuous processes. 
Although {\sc pddl+} supports continuous variables, the values of continuous variables are functions of the sequence of discrete actions performed at particular times. 
Thus, time is the only truly non-dependent continuous variable. 
In contrast, our motivating robot domain has no notion of time but instead a continuously infinite branching factor.
Many planners solve {\sc pddl+} problems with non-linear process models by discretizing time~\cite{della2009upmurphi,piotrowski2016heuristic}. Some recent planners can solve {\sc pddl+} problems with polynomial process models exactly without time discretization~\cite{bryce2015smt,cashmore2016compilation}.
However, even in simplified robotics domains that {\sc pddl+} can model, modern {\sc pddl+} planners are ineffective at planning with collision and kinematic constraints (both highly non-polynomial constraints), particularly in high-dimensional systems.


General-purpose lifted and first-order approaches, such as
those based on first-order situation calculus or Prolog, provide
semi-decision procedures for a large class of lifted planning
problems.  However, the generality tends to come at a huge price in
efficiency and these planning strategies are rarely practical. 




The Answer Set Programming (ASP) literature contains analysis on reasoning in infinite domains through finitary, $\omega$-restricted, finitely ground, and finite domain ASPs~\cite{bonatti2010answer}. 
The DLV-Complex system~\cite{calimeri2009asp} is able to solve feasible finitely ground programs by extending the DataLog with Disjunction (DLV) system to support functions, lists, and set terms.
We believe 
that the language of ASP allows specification of
conditional and certified streams.  
However, the ground ASP solver
still has to address a much more general and difficult problem and
will not have the appropriate heuristic strategies that make current
domain-independent \strips{} planners so effective. 

Semantic attachments~\cite{dornhege09icaps}, predicates computed
by an external program, also provide a way of integrating blackbox
procedures and {\sc pddl} planners. 
Because semantic attachments take 
a state as input, they can only be used in forward state-space search.
Furthermore, semantic attachments are ignored in heuristics. This results in poor planner performance, particularly when the attachments are expensive to evaluate such as in robotics domains. 
Finally, because semantic attachments are restricted to be functions, they are unable to model domains
with infinitely many possible successor states.

Many approaches to robotics planning problems, including motion
planning and task-and-motion planning, have developed strategies for
handling continuous spaces that go beyond {\em a priori}
discretization. 
Several approaches, for example~\cite{HPN,Erdem,Srivastava14,GarrettIROS15,dantam2016tmp,garrett2016ffrob}, have been suggested for
integrating these sampling-based robot motion planning methods with
symbolic planning methods.  Of these approaches, those able to plan in
realistic robot domains have typically been quite special purpose; the
more general purpose approaches have typically been less capable.  




\section{Representation}
In this section we describe the representational components of a
planning domain and problem, which include static and fluent
predicates, operators, and streams.  {\em Objects} 
serve as arguments to predicates and as
parameters to operators;  they are generated by streams.

A {\em static predicate} is a predicate which, for any tuple of
objects, has a constant truth value throughout a problem instance.
Static predicates generally serve to represent constraints on the
parameters of an operator. We restrict static predicates to only ever 
be mentioned positively because, in the general infinite case, it is not possible to 
verify that a predicate does not hold.

An {\em operator} schema is specified by a tuple of formal parameters $(X_1,
\ldots, X_n)$ and conjunctions of static positive
preconditions \kw{stat}, fluent literal preconditions \kw{pre}, and
fluent literal effects \kw{eff} and has the same semantics as in \strips.
An operator instance is a ground instantiation of an operator schema
with objects substituted in for the formal parameters.
When necessary, we augment the set of operator schemas with a set of axioms
that naively use the same schema form as operators. We assume the set
of axioms can be compiled into a set of derived predicates as used in \pddl.   

A {\em generator} $g = \langle \bar{o}^1, \bar{o}^2, ... \rangle$ is a
finite or infinite sequence of object tuples $\bar{o} = (o_1, ...,
o_n)$.
The procedure $\proc{next}(g)$ returns the next element in generator
$g$ and returns the special object $\kw{None}$ to indicate that the
stream has been exhausted and contains no more objects.
A {\em conditional generator} $f(\bar{x})$ is a function from $\bar{x}
= x_1, ..., x_n$ to a generator $g_{\bar{x}}$ which generates tuples
from a domain not necessarily the same as the domain of $\bar{x}$. 

An {\em stream} schema, $\sigma(\bar{Y} \mid \bar{X})$, is specified
by a 
tuple of input parameters $\bar{X} = (X_1, ..., X_m)$, a tuple of
output parameters $\bar{Y} = (Y_1, ..., Y_n)$, a conditional generator
$\kw{gen} =  f(\bar{X})$ defined on $\bar{X}$, a conjunction of input static
atoms $\kw{inp}$ defined on $\bar{X}$, and a conjunction of output static
atoms $\kw{out}$ defined on $\bar{X}$ and $\bar{Y}$. 
The conditional generator $f$ is a function, implemented in the host
programming language, that returns a generator such that,
for all $\bar{x}$ satisfying the conditions \kw{inp}, $\forall \bar{y}
\in f(\bar{x}), (\bar{x}, \bar{y})$ satisfy the conditions \kw{out}. 
A stream instance is a ground instantiation of a stream schema with
objects substituted in for input parameters $(X_1, \ldots, X_n)$;  it
is {\em conditioned} on those object values and, if the \kw{inp}
conditions are satisfied, then it will generate a stream of tuples of
objects each of which satisfies the certification conditions \kw{out}.



The notion of a conditional stream is quite general;  there are two
specific cases that are worth understanding in detail.
An {\em unconditional stream} $\sigma(\bar{Y} \mid ())$ is a stream
with no inputs whose associated function $f$ returns a single
generator, which might be used to generate objects of a given type,
for example, independent of whatever other objects are specified in a
domain.
A {\em test stream} $\sigma(() \mid \bar{X})$ is a degenerate, but
still useful, type of stream with no outputs.  In this case, 
$f(X_1, ..., X_m)$ contains either the single element $()$, indicating
that the \kw{inp} conditions hold of $\bar{X}$, or
contains no elements at all, indicating that the \kw{inp} conditions
do not hold of $\bar{X}$.  It can be interpreted as an implicit
Boolean test.


A {\em planning domain}
${\cal D} = ({\cal P}_s, {\cal P}_f, {\cal C}_0, {\cal A}, {\cal X},
\Sigma)$ is specified by finite sets of static predicates
${\cal P}_s$, fluent predicates ${\cal P}_f$, initial constant objects
${\cal C}_0$, operator schemas ${\cal A}$, axiom schemas ${\cal X}$, and
stream schemas $\Sigma$.  Note that the initial objects (as well as
objects generated by the streams) may in general not be simple
symbols, but can be numeric values or even structures such as matrices
or objects in an underlying programming language.  They must provide a
unique ID, such as a hash value, for use in the STRIPS planning phase.
 
A {\em \algname{} problem} $\Pi = ({\cal D}, O_0, s_0, s_*)$ is
specified by a planning domain ${\cal D}$, a finite set of initial objects
$O_0$, an initial state composed of a finite set of static or fluent
atoms $s_0$, and a goal set defined to be the set of states satisfying
fluent literals $s_*$.  
We make a version of the closed world assumption on the initial state
$s_0$, assuming that all true fluents 
are contained in it.  This
initial state will not be complete: in general, it will be impossible
to assert all true static atoms when the universe is infinite.

Let ${\cal O}_\Pi$ and ${\cal S}_\Pi$ be the universe of all objects and the set of true initial atoms
that can be generated from
a finite set $\Sigma$ of stream schemas, a finite set $C_0 \cup O_0$ of
initial objects, and initial state $s_0$. We give all proofs in the the appendix.
\begin{thm}  \label{thm:re}
${\cal O}_\Pi$ and ${\cal S}_\Pi$ are recursively enumerable (RE). 
\end{thm}
A {\em solution} to a \algname{} problem $\Pi$ is a finite sequence of operator instances $\pi_*$ with object parameters contained within ${\cal O}_\Pi$ that is applicable from ${\cal S}_\Pi$ and results in a state that satisfies $s_*$.
\algname{} is undecidable but semi-decidable, so we restrict our attention
to feasible instances.

\begin{thm}\label{thm:decide}
The existence of a solution for a \algname{} problem $\Pi$ is
undecidable. 
\end{thm}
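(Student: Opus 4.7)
The plan is to reduce from the halting problem for deterministic Turing machines. Given a machine $M$ and input $w$, I construct a \algname{} problem $\Pi_{M,w}$ whose solution set is nonempty iff $M$ halts on $w$. The essential ingredient is that the stream generators $f$ are implemented in the host programming language and that objects may be arbitrary structures with unique IDs; this gives streams enough expressive power to simulate any computable function.

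I would encode each Turing machine configuration (state, tape contents, head position) as an object. The initial object set $O_0$ contains a single constant $c_0$ representing the start configuration $(q_0, w)$. I then introduce two stream schemas. The first is a conditional stream $\sigma_{\text{step}}(Y \mid X)$ whose generator $f_{\text{step}}(x)$ runs one transition of $M$ from configuration $x$ and yields the successor $y$; its output certification is the static atom $\textsc{Next}(X, Y)$. The second is a test stream $\sigma_{\text{halt}}(() \mid X)$, whose generator emits $()$ iff $x$ is a halting configuration, certifying the static atom $\textsc{Halted}(X)$. I then add two operator schemas: a \emph{Step} operator with parameters $(X, Y)$, static precondition $\textsc{Next}(X, Y)$, fluent precondition $\textsc{Reached}(X)$, and effect $\textsc{Reached}(Y)$; and a \emph{Finish} operator with parameter $X$, static precondition $\textsc{Halted}(X)$, fluent precondition $\textsc{Reached}(X)$, and effect $\textsc{Goal}$. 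The initial fluent state is $\{\textsc{Reached}(c_0)\}$ and the goal is $\{\textsc{Goal}\}$.

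It remains to argue the reduction is correct. If $M$ halts on $w$ after $T$ steps, visiting configurations $c_0, c_1, \ldots, c_T$, then by Theorem~\ref{thm:re} each $c_i$ lies in ${\cal O}_\Pi$ and each atom $\textsc{Next}(c_i, c_{i+1})$ and $\textsc{Halted}(c_T)$ lies in ${\cal S}_\Pi$; applying \emph{Step} $T$ times followed by \emph{Finish}$(c_T)$ gives a valid plan. Conversely, any plan must end in some \emph{Finish}$(x)$, whose preconditions force $\textsc{Halted}(x)$ to have been certified, which by the contract of $\sigma_{\text{halt}}$ means $x$ is a genuine halting configuration of $M$; and $\textsc{Reached}(x)$ can only be established by a chain of \emph{Step} applications from $c_0$, each link of which is a genuine $M$-transition. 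Hence a plan exists iff $M$ halts on $w$, and undecidability follows.

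The main obstacle in formalizing this is confirming that STRIPStream's semantics really permit the ``chaining'' needed to propagate a computation of unbounded length: the step stream must be callable on objects that were themselves produced by prior calls to the same stream. This follows from the definition of ${\cal O}_\Pi$ as the closure of $C_0 \cup O_0$ under the stream schemas, together with the fact that operator parameters are drawn from ${\cal O}_\Pi$. Everything else is routine encoding.
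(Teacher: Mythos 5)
Your reduction is correct, and it targets the same source problem as the paper (the halting problem), but the encoding is genuinely different. The paper's construction hides the entire simulation inside a single \emph{unconditional} stream \proc{Reachable-U}$(X \mid ())$ that emits one more reachable TM configuration per call; the only operator is \proc{Halt}$(X)$ with static precondition $\id{IsReachable}(X)$, so the witness plan has length one and all of the computational work lives in the enumeration of ${\cal S}_\Pi$. You instead distribute the simulation across a \emph{conditional} stream $\sigma_{\text{step}}$ and a chain of \emph{Step} operators, so the plan length tracks the length of the computation and the certification of each $\textsc{Next}(c_i,c_{i+1})$ link does only bounded work per stream call. Both are sound: your forward direction correctly appeals to the closure defining ${\cal O}_\Pi$ and ${\cal S}_\Pi$, and your backward direction correctly observes that ${\cal O}_\Pi$ contains only genuine reachable configurations and that $\textsc{Reached}$ can only propagate along certified $\textsc{Next}$ edges. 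The paper's version is leaner and makes the undecidability obviously reside in the stream semantics alone; yours additionally shows that undecidability persists even when every individual generator call is a constant-time computation and the hardness is pushed into the unbounded chaining of conditional streams, which is arguably a sharper statement about the formalism. The one point worth making explicit in your write-up is that your \emph{Finish}/$\textsc{Goal}$ machinery is not strictly needed -- the goal could simply be $\exists X.\,\textsc{Reached}(X)\wedge\textsc{Halted}(X)$-style, or as in the paper, a single named accept configuration -- but it does no harm.
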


\begin{thm}\label{thm:decide}
The existence of a solution for a \algname{} problem $\Pi$ is
semi-decidable. 
\end{thm}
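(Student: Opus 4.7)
The plan is to combine Theorem~\ref{thm:re} with the decidability of finite-domain \strips{} planning via a standard dovetailing argument. Since ${\cal O}_\Pi$ and ${\cal S}_\Pi$ are RE, fix an enumerator $M$ that produces, in sequence, finite approximations $(O_k, S_k)$ with $O_k \subseteq O_{k+1}$, $S_k \subseteq S_{k+1}$, $\bigcup_k O_k = {\cal O}_\Pi$, and $\bigcup_k S_k = {\cal S}_\Pi$. At iteration $k$, build a finite \strips{} instance $\Pi_k = ({\cal P}_s, {\cal P}_f, O_k, {\cal A}, {\cal X}, s_0 \cup S_k, s_*)$ whose universe is $O_k$ and whose true static atoms are exactly those in $S_k$. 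Because $\Pi_k$ has finitely many objects, it has finitely many ground operator instances and finitely many reachable states, so its solvability is decidable (indeed PSPACE-complete). Run a complete \strips{} planner on $\Pi_k$; if it returns a plan $\pi_k$, output it, and otherwise proceed to iteration $k+1$.

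For soundness, any plan $\pi_k$ returned uses only objects from $O_k \subseteq {\cal O}_\Pi$ and its preconditions are supported by static atoms from $S_k \subseteq {\cal S}_\Pi$ together with fluents in $s_0$, so $\pi_k$ is also applicable from ${\cal S}_\Pi$ in the original problem $\Pi$ and achieves $s_*$. For completeness, suppose a solution $\pi_*$ to $\Pi$ exists. By definition it is a finite sequence of operator instances whose parameters lie in ${\cal O}_\Pi$ and whose static preconditions lie in ${\cal S}_\Pi$. Let $F_O$ and $F_S$ be the (finite) sets of objects and static atoms mentioned in $\pi_*$ together with $s_0$ and $s_*$. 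By the enumeration property, there is some $k_*$ with $F_O \subseteq O_{k_*}$ and $F_S \subseteq S_{k_*}$. Then $\pi_*$ is a valid plan for $\Pi_{k_*}$, so the complete finite planner finds \emph{some} plan for $\Pi_{k_*}$ (possibly a different one) and the procedure halts by iteration $k_*$ at the latest.

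The one subtlety, and the main obstacle, concerns the closed-world treatment of static predicates within each $\Pi_k$. Since static atoms are only asserted positively and the universe is potentially infinite, missing atoms in $S_k$ cannot be interpreted as false without risking unsoundness at the meta-level; however, soundness is actually preserved because operator preconditions reference only positive static literals, so any plan verified with respect to $S_k$ remains verified with respect to the superset ${\cal S}_\Pi$. Conversely, completeness requires that $S_k$ eventually grow large enough to support $\pi_*$, which is exactly what the enumeration of ${\cal S}_\Pi$ guarantees. Once this point is addressed, the dovetailed enumerate-and-plan loop yields the desired semi-decision procedure. Note that no upper bound on $k_*$ is available in general, consistent with the undecidability stated in Theorem~\ref{thm:decide}.
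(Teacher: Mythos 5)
Your proposal is correct and follows essentially the same route as the paper's proof: use the recursive enumeration of ${\cal O}_\Pi$ and ${\cal S}_\Pi$ from Theorem~\ref{thm:re} to produce a sequence of finite \strips{} instances, decide each one, and halt when a plan is found. Your additional remarks on soundness (positivity of static preconditions) and on the eventual inclusion of the objects and atoms supporting $\pi_*$ just make explicit what the paper leaves implicit.
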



\section{Planning algorithms}
We present two algorithms for solving \algname{} problems:  the {\em
  \eager{}} planner takes advantage of certified conditional streams in
the problem specification to generate the necessary objects for solving
the problem;  the {\em focused} planner adds the ability to focus the
object-generation process based on the requirements of the plan being
constructed.  Both algorithms are sound and complete: if a solution
exists they will find it in finite time.

Both planners operate iteratively, alternating between adding elements
and atoms to a current set of objects and initial atoms and constructing and
solving \strips{} planning problem instances. 
A \strips{} problem $({\cal P}, {\cal
  A}, O, s_\id{init}, s_*)$ is specified by a set of predicates, a set of
operator schemas, a set of constant symbols, an initial set of
atoms, and a set of goal literals.  
Let $\proc{s-plan}({\cal P}, {\cal A}, O, s_\id{init}, s_*)$ be any sound and complete
planner for the \strips{} subset of \pddl{}.
We implement \proc{s-plan} using FastDownward~\cite{helmert2006fast}.

\subsection{\Eager{} planner} 

The \eager{} planner maintains a queue of
stream instances $Q$ and incrementally constructs set ${\cal O}$ of
objects and set ${\cal S}$ of fluents and static atoms that are true in the
initial state.  
The {\em done} set $D$ contains all streams that have been constructed and
exhausted.  In each iteration of the main loop, a \strips{} planning
instance is constructed from the current sets ${\cal
  O}$ and ${\cal S}$, with the same predicates, operator and axiom schemas, and goal.  If a
plan is obtained, it is returned.  If not, then $K \geq 1$ attempts to add
new objects are made where $K$ is a meta-parameter.
In each one, a stream $\sigma(\bar{Y}\mid\bar{x})$
is popped from $Q$ and a new tuple of objects $\bar{y}$ is extracted
from it.  If the stream is exhausted, it is stored in $D$.  
Otherwise, the objects in $\bar{y}$ are added to
${\cal O}$, the output fluents from $\sigma$ applied to $(\bar{x},
\bar{y})$ are added to ${\cal S}$, and a new set of streams $\Sigma_n$
is constructed.  For all stream schemas $\sigma$ and possible tuples
of the appropriate size $\bar{x}'$, if the input conditions
$\sigma'.\kw{inp}(\bar{x}')$ are in ${\cal S}$, then the instantiated
stream $\sigma'(\bar{Y}'\mid{\bar x}')$ is added to $Q$ if it has not
been added previously.  We also return the stream
$\sigma(\bar{Y}\mid\bar{x})$ to $Q$ so we may revisit it in the future.
The pseudo-code is shown below.

\begin{footnotesize}
\begin{codebox}
\Procname{$\proc{\eager{}}((({\cal P}_s, {\cal P}_f,  {\cal
    C}_0, {\cal A}, {\cal X}, \Sigma), O_0, s_0, s_*),
  \proc{s-plan}, K):$} 
\zi ${\cal O} = {\cal C}_0 \cup O_0;\; {\cal S} = s_0;\; D = \emptyset$
\zi $Q = \proc{Queue}(\{\sigma(\bar{Y} \mid \bar{x}) \mid \sigma(\bar{Y} \mid \bar{X}) \in \Sigma$,
\zi \;\;\;\;\;\;\;\;\;\;\;\;\;\;\;\;\;\;\;\;\;\;$\bar{x} \in {\cal O}^{|\bar{X}|}, \sigma.\kw{inp}(\bar{x}) \subseteq {\cal S}\})$
\zi \While \kw{True}: \Do
\zi $\pi_* = \proc{s-plan}({\cal P}_s \cup {\cal P}_f, {\cal A}\cup{\cal
  X}, {\cal  O}, {\cal S}, s_*)$ 
\zi \If $\pi_* \neq \kw{None}$: \Then
\zi \Return $\pi_*$
\End
\zi \If $\proc{empty}(Q)$: \Then
\zi \Return $\kw{None}$
\End
\zi \For $k \in \{1, ..., \proc{min}(K, \proc{len}(Q))\}$: \Do
\zi $\sigma(\bar{Y} \mid \bar{x}) = \proc{pop}(Q)$
\zi $\bar{y} = \proc{next}(\sigma.f(\bar{x}))$
\zi \If $\bar{y} = \kw{None}$: \Then 
\zi $D = D \cup \{\sigma(\bar{Y} \mid \bar{x})\}$
\zi \kw{continue}
\End
\zi ${\cal O} = {\cal O} \cup \bar{y};\; {\cal S} = {\cal S} \cup
\sigma.\kw{out}((\bar{x}, \bar{y}))$ 
\zi \For $\sigma'(\bar{Y}' \mid \bar{X}') \in \Sigma$: \Do
\zi \For $\bar{x}' \in {\cal O}^{|\bar{X}'|}$: \Do
\zi \If $\sigma'.\kw{inp}(\bar{x}') \subseteq {\cal S}$, $\sigma'(\bar{Y}' \mid \bar{x}') \notin (Q \cup D)$: \Then
\zi $\proc{push}(Q, \sigma'(\bar{Y}' \mid \bar{x}'))$
\End\End\End
\zi $\proc{push}(Q, \sigma(\bar{Y} \mid \bar{x}))$
\End\End
\end{codebox}
\end{footnotesize}

In practice, many \proc{s-plan} calls report infeasibility immediately because they
have infinite admissible heuristic values. 
We prove the \eager{} algorithm is complete in the appendix.

\begin{thm}
The \eager{} algorithm is complete.
\end{thm}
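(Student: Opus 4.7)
The plan is to show that if a solution $\pi_*$ exists, then after finitely many iterations the set ${\cal O}$ contains every object parameter appearing in $\pi_*$ and ${\cal S}$ contains every static atom required by the preconditions of its operators. Once that happens, the \strips{} instance passed to \proc{s-plan} admits $\pi_*$ as a legal plan, so by completeness of \proc{s-plan} a plan is returned, and by soundness of \proc{s-plan} that plan is a valid solution to $\Pi$.

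I would begin by invoking Theorem~\ref{thm:re}: every object parameter of $\pi_*$ lies in ${\cal O}_\Pi$ and every required static atom lies in ${\cal S}_\Pi$, and both sets are the closure of ${\cal C}_0 \cup O_0 \cup s_0$ under finitely many applications of stream schemas. In particular, each needed object $o$ and each needed atom $a$ has a \emph{finite} derivation tree whose leaves are initial objects and initial atoms and whose internal nodes are stream instance applications. Let $F$ be the finite union of the stream instances appearing in these derivations.

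The core step is a fairness lemma for $Q$: every stream instance $\sigma(\bar{Y}\mid\bar{x}) \in F$ is eventually inserted into $Q$, and once inserted it is popped in finitely many outer iterations. The first part follows by induction on derivation depth, using the inner double loop that rescans all schemas $\sigma'$ and all tuples $\bar{x}' \in {\cal O}^{|\bar{X}'|}$ whenever new objects or atoms are added, combined with the fact that a stream instance is only withheld from $Q$ when it is already in $Q$ or in $D$. The second part relies on the FIFO discipline of $Q$: if $\sigma(\bar{Y}\mid\bar{x})$ sits at position $p$, it is popped after at most $\lceil p/K\rceil$ outer iterations; after being popped it either terminates (enters $D$) or is re-pushed at the back, so each non-exhausted instance is popped infinitely often and therefore yields each of its elements in finite time.

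Combining these, every finite subset of ${\cal O}_\Pi \cup {\cal S}_\Pi$ is contained in ${\cal O} \cup {\cal S}$ after some finite iteration; in particular the finitely many objects and atoms appearing along $\pi_*$ are eventually present, so \proc{s-plan} returns a plan and the algorithm terminates. The main obstacle will be formalising fairness precisely: I must verify that the inner double loop installs every newly enableable instance (accounting for the interaction between the $Q \cup D$ membership check and the derivation-depth induction), and that the re-push at the end of each inner iteration, together with popping up to $K$ instances per outer iteration, preserves enough of the FIFO invariant to guarantee that each unexhausted instance is visited infinitely often even as $|Q|$ grows throughout the execution.
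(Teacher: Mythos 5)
Your proposal is correct and follows essentially the same route as the paper's proof: reduce to the recursive enumeration of ${\cal O}_\Pi$ and ${\cal S}_\Pi$ from Theorem~\ref{thm:re}, argue that every finite subset of these sets appears in ${\cal O}$ and ${\cal S}$ after finitely many iterations, and then invoke the soundness and completeness of \proc{s-plan}. The only difference is that you make explicit the queue-fairness argument (FIFO re-push, popping $K$ per iteration) that the paper compresses into the single remark that the algorithm ``calls \proc{next} in batches of $K$,'' which is a worthwhile elaboration but not a different proof.
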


\subsection{Focused planner}  

The focused planner is particularly aimed
at domains for which it is expensive to draw an object from a stream;
this occurs when the stream elements are certified to satisfy
geometric properties such as being collision-free or having
appropriate inverse kinematics relationships, for example.  To focus
the generation of objects on the most relevant parts of the space, we
allow the planner to use ``dummy'' abstract objects as long as it
plans to generate concrete values for them.  These concrete values
will be generated in the next iteration and will, hopefully,
contribute to finding a solution with all ground objects.

As before, we transform the \algname{} problem into a sequence of
\pddl{} problems, but this time we augment the planning domain with
abstract objects, two new fluents, and a new set of operator schemas.  
Let $\{\gamma_1, ..., \gamma_\theta\}$ be a set of {\em
  abstract objects} which are not assumed to satisfy any static
predicates in the initial state. 
We introduce the fluent predicate $\id{Concrete}$, which is initially
false for any object $\gamma_i$ but true for all actual ground
objects; so for all $o \in {\cal O}$, we add $\id{Concrete}(o)$ to
$s_{init}$.  The planner can ``cause'' an abstract object $\gamma_i$ to
satisfy $\id{Concrete}(\gamma_i)$ by generating it using a special
{\em stream operator}, as described below.  
We define procedure \proc{tform-ops}
that transforms each operator scheme $a(x_1, ..., x_n) \in {\cal A}$
by adding preconditions $\id{Concrete}(x_i)$ for $i = 1, ..., n$ to 
ensure that the parameters for $a$ are grounded before its
application during the search. 

To manage the balance in which streams are called,
for each stream schema $\sigma$, we introduce a new predicate
$\id{Blocked}_\sigma$; when applied to arguments
$(X_1, \ldots, X_n)$, it will temporarily prevent the use of stream
$\sigma(Y_1, ... Y_m \mid X_1, ..., X_n)$. Additionally, we add any new objects
and static atoms first to sets ${\cal O}_t$ and ${\cal S}_t$ temporarily before adding 
them to ${\cal O}$ and ${\cal S}$ to ensure any necessary existing streams
are called. Alternatively, we can immediately add directly to ${\cal O}$ and ${\cal S}$
a finite number of times before first adding to ${\cal O}_t$ and ${\cal S}_t$ and
still preserve completeness.
Let the procedure
\proc{tform-streams} convert each stream schema into an operator
schema $\sigma$ of the following form.

\begin{footnotesize}
\begin{codebox}
\Procname{\proc{StreamOperator}$_\sigma(X_1, ..., X_m, Y_1, ..., Y_n)$:}

\zi \kw{pre} = $\sigma.\kw{inp} \;\cup\;$ \=$\{\id{Concrete}(X_i) \mid i = 1,
..., m\}\; \cup$ 
\zi\>$\{\neg \id{Blocked}_\sigma(X_1, ..., X_m)\}$
\zi \kw{eff} = $\sigma.\kw{out} \;\cup \{\id{Concrete}(Y_i) \mid i = 1, ..., n\}$
\end{codebox}
\end{footnotesize}

\noindent
It allows \proc{s-plan} to explicitly plan to generate a tuple of
concrete objects from stream $\sigma(Y_1, ... Y_m \mid x_1, ..., x_n)$
as long as the $x_i$ have been made concrete and the stream instance
is not blocked.  





The procedure $\proc{\focused{}}$, shown below, implements the \focused{}
approach to planning.  It takes the same inputs as the \eager{}
version, but with the maximum number of abstract objects $\theta \geq 1$
specified as a meta-parameter, rather than $K$.  It also maintains a set ${\cal O}$ of
concrete objects and a set ${\cal S}$ of fluent and static atoms true in the
initial state.
In each iteration of the main loop, a
\strips{} planning instance is constructed: the initial state is
augmented with the set of static atoms indicating which streams are
blocked and fluents asserting that the objects in ${\cal O}$ are
concrete; the set of operator schemas is transformed as described above
and augmented with the stream operator schemas, and the set of objects
is augmented with the abstract objects.  If a plan is obtained and it
contains only operator instances, then it will have only concrete
objects, and it can be returned directly.  If the plan contains abstract objects, it also contains stream
operators, and \proc{add-objects}
is called to generate an appropriate set of new objects.  If no plan
is obtained, and if no streams are currently blocked as well as no new objects
or initial atoms have been produced since the last reset, then the problem
is proved to be infeasible.  
Otherwise, the problem is reset by unblocking all streams and adding
${\cal O}_t$ and ${\cal S}_t$ to ${\cal O}$ and ${\cal S}$, in order to allow
a new plan with abstract objects to be generated.

\begin{footnotesize}
\begin{codebox}
\Procname{$\proc{focused}((({\cal P}_s, {\cal P}_f,  {\cal C}_0, {\cal A}, {\cal X}, \Sigma), O_0, s_0, s_*), \proc{s-plan}, \theta):$}
\zi ${\cal O} =  {\cal C}_0 \cup O_0;\; {\cal S} = s_0$;\; ${\cal O}_t = {\cal S}_t = \beta_t = \beta_p = \emptyset$ 
\zi $\bar{\cal A} = \proc{tform-ops}({\cal A});\; \bar{\Sigma} = \proc{tform-streams}(\Sigma)$ 
\zi \While \kw{True}: \Do
\zi $\pi = \proc{s-plan}($\=${\cal P}_s \cup {\cal P}_f, \bar{\cal A} \cup {\cal X} \cup \bar{\Sigma}, {\cal O} \cup \{\gamma_1, ..., \gamma_\theta\},$
\zi \> $\;{\cal S} \cup \beta_t \cup \beta_p \cup \{\id{Concrete}(o \in {\cal O})\} ,s_*)$ 
\zi \If $\pi $\=$\;\not= \kw{None}$: \Then 
\zi \If $\forall a \in \pi$, $\proc{schema}(a) \in \bar{\cal A}$: \Then 
\zi \Return $\pi$
\End
\zi $\proc{add-objects}(\pi, {\cal O}_t, {\cal S}_t,  \beta_t, \beta_p, \bar{\Sigma})$ 
\zi \Else
\zi \If ${\cal O}_t = {\cal S}_t = \beta_t = \emptyset$: \Then
\zi \Return $\kw{None}$ \Comment Infeasible
\End
\zi ${\cal O} = {\cal O} \cup {\cal O}_t; {\cal S} = {\cal S} \cup {\cal S}_t$
\zi ${\cal O}_t = {\cal S}_t = \beta_t = \emptyset$ \Comment Enable all objects \& streams
\End\End\End
\end{codebox}
\end{footnotesize}

Given a plan $\pi$ that contains abstract objects, we process it
from beginning to end, to generate a collection of new objects with
appropriate conditional relationships.  Procedure \proc{add-objects}
initializes an empty binding environment and then loops through the
instances $a$ of stream operators in $\pi$.  For each stream operator
instance, we substitute concrete objects in for abstract objects, in
the input parameters, dictated by the bindings \id{bd}, and then draw
a new tuple of objects from that conditional stream.  If there is no such
tuple of objects, the stream is exhausted and it is permanently removed from
future consideration by adding the fluent
$\id{Blocked}_\sigma(\bar{o}_x)$ to the set $\beta_p$. Otherwise, the new
objects are added to ${\cal O}_t$ and appropriate new static atoms to
${\cal S}_t$.  This stream is temporarily blocked by adding fluent
$\id{Blocked}_\sigma(\bar{o}_x)$ to the set $\beta_t$, and the
bindings for abstract objects are recorded.

\begin{footnotesize}
\begin{codebox}
\Procname{$\proc{add-objects}(\pi, {\cal O}_t, {\cal S}_t, \beta_t, \beta_p, \bar{\Sigma}):$} 
\zi $\id{bd} = \{\;\}$ \Comment Empty dictionary
\zi \For $\sigma (\bar{y} \mid \bar{x}) \in \{a \mid a \in \pi$
\kw{and} $\proc{schema}(a) \in \bar{\Sigma}\}$: \Do
\zi $\bar{o}_x = \proc{apply-bindings}(\id{bd}, \bar{x})$
\zi \If $\bar{o}_x  \neq \kw{None}$: \Then 
\zi $\bar{o}_y = \proc{next}(\sigma.f(\bar{o}_x))$  
\zi \If $\bar{o}_y  \neq \kw{None}$: \Then 
\zi ${\cal O}_t = {\cal O}_t \cup \bar{o}_y; {\cal S}_t ={\cal S}_t \cup \sigma.\kw{out}((\bar{o}_x, \bar{o}_y))$ 
\zi $\beta_t = \beta_t \cup \{\id{Blocked}_\sigma(\bar{o}_x)\}$ \Comment Temporary
\zi \For $i \in \{1, ..., |\bar{y}|\}$: \Then
\zi $\id{bd}[y_i] = o_{y, i}$
\End
\zi \Else
\zi $\beta_p = \beta_p \cup \{\id{Blocked}_\sigma(\bar{o}_x)\}$ \Comment Permanent
\End\End\End
\zi \Return ${\cal O}_t, {\cal S}_t, \beta_t, \beta_p$
\end{codebox}
\end{footnotesize}


The \focused{} algorithm is similar to the lazy shortest path
algorithm for motion planning in that it determines which streams to call, or 
analogously which edges to evaluate, by repeatedly solving optimistic problems~\cite{bohlin2000path,dellin2016unifying}.
Stream operators can be given meta-costs that reflect the time overhead to 
draw elements from the stream and the likelihood the stream produces the desired values. 
For example, stream operators that use already concrete outputs can be given large meta-costs because they will only certify a desired predicate in the typically unlikely event that their generator returns objects matching the desired outputs.
A cost-sensitive planner will avoid returning
plans that require drawing elements from expensive or unnecessary streams. 
We can combine the behaviors of \eager{} and \focused{} algorithms to
eagerly call inexpensive streams and lazily call expensive streams. This can 
be seen as automatically applying some stream operators before calling \proc{s-plan}.


\begin{thm}
The \focused{} algorithm is complete.
\end{thm}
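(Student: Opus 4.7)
The plan is to reduce the completeness of \proc{focused} to the fact, analogous to the \eager{} case, that the committed sets ${\cal O}$ and ${\cal S}$ monotonically enumerate ${\cal O}_\Pi$ and ${\cal S}_\Pi$. Define the \emph{derivation depth} of an object $o \in {\cal O}_\Pi$ to be the length of a shortest chain of stream instantiations from $C_0 \cup O_0$ that produces $o$, extended analogously to certifying static atoms. If a solution $\pi_*$ exists, each of its object arguments has finite derivation depth bounded by some $d^*$, and I would show that within finitely many outer iterations ${\cal O}, {\cal S}$ contain every depth-$\leq d^*$ element used by $\pi_*$, at which point \proc{s-plan}, being complete on the STRIPS subproblem, returns a concrete solution.

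I would proceed in three steps. First, \emph{monotonicity}: ${\cal O}$ and ${\cal S}$ never lose elements, and $\beta_p$ only ever records stream instances genuinely exhausted by \proc{next}, so permanent blocks cannot exclude any element of ${\cal O}_\Pi$ needed for $\pi_*$. Second, \emph{window finiteness}: within a single reset window, the set of currently callable stream instantiations over the (finite) ${\cal O}$ is finite; each successful draw inserts into $\beta_t$ and each exhaustion inserts into $\beta_p$, so after finitely many outer iterations either \proc{s-plan} returns a purely concrete plan (and \proc{focused} halts) or no abstract plan remains and a reset fires. Third, \emph{inductive growth across resets}: invoking the ``direct-addition a finite number of times'' refinement noted in the text, I would induct on $d$ to conclude that all depth-$\leq d$ objects and atoms of ${\cal O}_\Pi, {\cal S}_\Pi$ are eventually committed to ${\cal O}, {\cal S}$.

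The main obstacle is ruling out a spurious firing of the infeasibility branch when $\pi_*$ exists; that branch executes only if \proc{s-plan} returns \kw{None} immediately after a reset, when ${\cal O}_t = {\cal S}_t = \beta_t = \emptyset$. The argument here is that as long as some depth-$\leq d^*$ element needed by $\pi_*$ is still missing from ${\cal O}$, there exists a lifted plan obtained from $\pi_*$ by replacing each missing object with an abstract $\gamma_i$ introduced via the corresponding stream operator; provided the stream operators are sequenced so that at most $\theta$ abstract bindings are live simultaneously, this lifted plan fits inside the $\theta$-object budget and is delivered by \proc{s-plan}. When $\theta$ is too small to accommodate such a sequencing in a single window, the direct-addition refinement guarantees ${\cal O}$ nonetheless grows in finite time. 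Combined with monotonicity and window finiteness, the induction on $d$ then terminates in finitely many outer iterations, yielding the required concrete plan.
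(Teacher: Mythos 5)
There is a genuine gap, and it sits exactly where the hard part of the proof lives. Your step three claims that \emph{all} objects and atoms of ${\cal O}_\Pi, {\cal S}_\Pi$ up to derivation depth $d$ are eventually committed to ${\cal O}$ and ${\cal S}$. That is false for the \focused{} algorithm: unlike the \eager{} planner, it only calls stream instances that appear on plans returned by \proc{s-plan}, and the paper explicitly advertises that stream instances playing no role in any plausible plan are never called (e.g.\ inverse kinematics for irrelevant pose constants). So you cannot reduce completeness to a breadth-first enumeration of ${\cal O}_\Pi$ by depth; you must argue that the \emph{specific} stream calls needed to realize the solution $\pi_*$ are forced to happen. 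The paper does this with a progress measure your proposal lacks: it fixes the minimal sequence $\Sigma_*$ of stream-instance calls that would produce the missing objects ${\cal O}_*$ and atoms ${\cal S}_*$, observes that within an episode ${\cal O}$ and ${\cal S}$ are frozen so only finitely many optimistic plans exist, and shows that each iteration blocks (via $\beta_t$) the stream instances of the returned plan so that plan cannot recur. Hence within each episode either some member of $\Sigma_*$ is called, or eventually the lifted plan $\tilde{\pi}_*$ (the concatenation of $\Sigma_*$ with $\pi_*$) is the only unblocked plan left and \proc{s-plan} must return it, which again calls a member of $\Sigma_*$. Then $|\Sigma_*|$ strictly decreases across episodes, which is what rules out both nontermination and a spurious firing of the infeasibility branch. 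Your ``window finiteness'' observation gets you that each episode terminates, but not that each episode makes progress toward $\pi_*$.

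Your handling of the abstract-object budget $\theta$ is also off. You require the stream operators to be sequenced so that at most $\theta$ abstract bindings are ``live'' at once, but when several missing objects must co-occur as parameters of a single action instance (a pose and its matching configuration in \proc{Pick}, say) no sequencing makes that work for $\theta = 1$. The paper's resolution is different and simpler: abstract objects carry no initial static atoms, so a single $\gamma$ may stand in for several distinct missing objects simultaneously in $\tilde{\pi}_*$, and completeness holds for any $\theta \geq 1$. Your fallback to the ``direct-addition'' refinement does not repair this, since that refinement concerns when generated objects are committed to ${\cal O}$, not how many abstract placeholders the lifted plan needs.
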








\section{Example discrete domain}

Although the specification language is domain independent, our primary
motivating examples for the application of \algname{} are 
pick-and-place problems in infinite domains.  
We start by specifying an infinite discrete pick-and-place domain as shown in figure~\ref{fig:countable_tamp}.
We purposefully describe the domain in a way that will generalize well
to continuous and high-dimensional versions of fundamentally the same
problem.  The objects in this domain include a finite set of blocks
(that can be picked up and placed), an infinite set of poses
(locations in the world) indexed by the positive integers, and an
infinite set of robot configurations (settings of the robot's physical
degrees of freedom) also indexed by the positive integers.  
In this appendix, we give a complete Python implementation of this domain in \algname{}
The static predicates in this domain include simple static types (\id{IsConf},
\id{IsPose}, \id{IsBlock}) and typical fluents (\id{HandEmpty},
\id{Holding}, \id{AtPose}, \id{AtConfig}).  In addition, atoms of the form
$\id{IsKin}(P, Q)$ describe a static relationship between an object pose
$P$ and a robot configuration $Q$:  in this simple domain, the atom is
true if and only if $P = Q$.  Finally, fluents of the form
$\id{Safe}(b', B, P)$ are true in the circumstance that: if object $B$ were
placed at pose $P$, it would not collide with object $b'$ at its
current pose.  Because the set of blocks {\cal B} is known statically
in advance, we explicitly include all the \id{Safe} conditions.
These predicate definitions enable the following operator schemas definitions:

\begin{footnotesize}
\begin{codebox}
\Procname{\proc{Move}$(Q_1, Q_2)$:}
\zi \kw{stat} = $\{\id{IsConf}(Q_1), \id{IsConf}(Q_2)\}$
\zi \kw{pre} = $\{\id{AtConf}(Q_1)\}$
\zi \kw{eff} = $\{\id{AtConf}(Q_2), \neg \id{AtConf}(Q_1)\}$
\end{codebox}
\begin{codebox}
\Procname{\proc{Pick}$(B, P, Q)$:}
\zi \kw{stat} = $\{\id{IsBlock}(B), \id{IsPose}(P), \id{IsConf}(Q), \id{IsKin(P, Q)}\}$
\zi \kw{pre} = $\{\id{AtPose}(B, P), \id{HandEmpty}(), \id{AtConfig}(Q)\}$
\zi \kw{eff} = $\{\id{Holding}(B), \neg \id{AtPose}(B, P), \neg \id{HandEmpty}()\}$
\end{codebox}
\begin{codebox}
\Procname{\proc{Place}$(B, P, Q)$:}
\zi \kw{stat} = $\{\id{IsBlock}(B), \id{IsPose}(P), \id{IsConf}(Q), \id{IsKin(P, Q)}\}$
\zi \kw{pre} = $\{\id{Holding}(B), \id{AtConfig}(Q)\} \cup \{\id{Safe}(b' \in {\cal B}, B, P)\}$
\zi \kw{eff} = $\{\id{AtPose}(B, P), \id{HandEmpty}(), \neg \id{Holding}(B)\}$
\end{codebox}
\end{footnotesize}

We use the following axioms to evaluate the \id{Safe} predicate.
We need two slightly different definitions to handle the cases where the block $B_1$ is placed at a pose,
and where it is in the robot's hand.
The \id{Safe} axioms mention each block independently which allows us to compactly perform collision checking. 
Without using axioms, \proc{Place} would require a parameter for the pose of each block in ${\cal B}$, resulting in an prohibitively large grounded problem.

\begin{footnotesize}
\begin{codebox}
\Procname{\proc{SafeAxiom}$(B_1, P_1, B_2, P_2)$:}
\zi \kw{stat} = $\{\id{IsBlock}(B_1), \id{IsPose}(P_1), \id{IsBlock}(B_2),$ \Indentmore
\zi $\id{IsPose}(P_2), \id{IsCollisionFree}(B_1, P_1, B_2, P_2)\}$ \End
\zi \kw{pre} = $\{\id{AtPose}(B_1, P_1)\}$
\zi \kw{eff} =$\{\id{Safe}(B_1, B_2, P_2)\}$
\end{codebox}
\begin{codebox}
\Procname{\proc{SafeAxiomH}$(B_1, B_2, P_2)$:}
\zi \kw{stat} = $\{\id{IsBlock}(B_1), \id{IsBlock}(B_2), \id{IsPose}(P_2)\}$ \End
\zi \kw{pre} = $\{\id{Holding}(B_1)\}$
\zi \kw{eff} =$\{\id{Safe}(B_1, B_2, P_2)\}$
\end{codebox}
\end{footnotesize}

\paragraph{Discrete stream specification}

Next, we provide stream definitions.  The simplest stream is an
unconditional generator of poses, which are represented as objects
$\proc{Pose}(i)$ and satisfy the static predicate $\id{IsPose}$.

\begin{footnotesize}
\begin{codebox}
\Procname{\proc{Pose-U}$(P \mid ())$:}
\zi \kw{gen} = $\kw{lambda} (): \langle (\proc{Pose}(i)) \;\kw{for}\; i = 0, 1, 2 ... \rangle$
\zi \kw{inp} = $\emptyset$
\zi \kw{out} = $\{\id{IsPose}(P)\}$
\end{codebox}
\end{footnotesize}

The conditional stream \proc{CFree-T} is a test, calling the
underlying function $\proc{collide}(B_1, P_1, B_2, P_2)$;  the stream
is empty if block $B_1$ at pose $P_1$ collides with block $B_2$ at
pose $P_2$, and contains the single element $(\;)$ if it does not
collide.  It is used to certify that the tuple $(B_1, P_1, B_2, P_2)$
statically satisfies the \id{IsCollisionFree} predicate.

\begin{footnotesize}
\begin{codebox}
\Procname{\proc{CFree-T}$(() \mid B_1, P_1, B_2, P_2)$:}
\zi \kw{gen} = $\kw{lambda} (B_1, P_1, B_2, P_2):$ 
\zi \>\>\> $\langle () \;\kw{if} \;\kw{not}\; \proc{collide}(B_1, P_1, B_2, P_2) \rangle$
\zi \kw{inp} = $\{\id{IsBlock}(B_1), \id{IsPose}(P_1), \id{IsBlock}(B_2), \id{IsPose}(P_2)\}$
\zi \kw{out} = $\{\id{IsCollisionFree}(B_1, P_1, B_2, P_2)\}$
\end{codebox}
\end{footnotesize}

When we have a static relation on more than one variable, such as
\id{IsKin}, we have to make modeling choices when defining streams that
certify it.

We will consider three formulations of streams that certify \id{IsKin}
and compare them in terms of their effectiveness in a simple countable pick-and-place
problem requiring the robot gripper to pick block $A$ at a distant
initial pose $p_0 >> 1$, shown in figure~\ref{fig:countable_tamp}.

\proc{Kin-U} specifies an unconditional stream on block poses and
robot configurations;  it has no difficulty certifying the \id{IsKin}
relation between the two output variables, but it has no good way of
producing configurations that are appropriate for poses that are
mentioned in the initial state or goal.  

\begin{footnotesize}
\begin{codebox}
\Procname{\proc{Kin-U}$(P, Q \mid ())$:}
\zi \kw{gen} = $\kw{lambda} (): \langle (\proc{Pose}(i), \proc{Conf}(i)) \;\For\; \text{i} = 1, 2, ... \rangle$
\zi \kw{inp} $ = \emptyset$
\zi \kw{out} = $\{\id{IsPose}(P), \id{IsConf}(Q), \id{IsKin}(P, Q)\}$
\end{codebox}
\end{footnotesize}

\proc{Kin-T} specifies a test stream that can be used, together with
the \proc{Pose-U} stream and an analogous stream for generic
configurations to produce certified kinematic pairs $P, Q$.  This is
an encoding of a ``generate-and-test'' strategy, which may be highly
inefficient, relying on luck that the pose generator and the
configuration generator will independently produce values that have
the appropriate relationship.

\begin{footnotesize}
\begin{codebox}
\Procname{\proc{Kin-T}$(() \mid P, Q)$:}
\zi \kw{gen} = $\kw{lambda} (P, Q): \langle () \;\kw{if}\; Q = \proc{inverse-kin}(P)\rangle$
\zi \kw{inp} = $\{\id{IsPose}(P), \id{IsConf}(Q)\}$
\zi \kw{out} = $\{\id{IsKin}(P, Q)\}$
\end{codebox}
\end{footnotesize}

Finally, \proc{Kin-C} specifies a conditional stream, which takes a
pose $P$ as input and generates a stream of configurations (in this
very simple case, containing a single element) certified to satisfy
the \id{IsKin} relation.  It relies on an underlying function
$\proc{inverse-kin}(p)$ to produce an appropriate robot configuration
given a block pose.

\begin{footnotesize}
\begin{codebox}
\Procname{\proc{Kin-C}$(Q \mid P)$:}
\zi \kw{gen} = $\kw{lambda} (P): \langle (\proc{inverse-kin}(P)) \rangle$
\zi \kw{inp} = $\{\id{IsPose}(P)\}$
\zi \kw{out} = $\{\id{IsConf}(Q), \id{IsIK}(P, Q)\}$
\end{codebox}
\end{footnotesize}

In our example domain, both \proc{Kin-U} and \proc{Kin-T} require the
enumeration of poses and configurations $(p_i, q_i)$ from $i = 0, 1,
..., p_0$ before certifying $\id{IsIK}(p_*, q_*)$, allowing \strips{}
to make a plan include the operator
$\proc{Pick}(A, p_*, q_*)$. Moreover, \proc{Kin-T} will test all pairs
of configurations and poses.  In contrast, \proc{Kin-C}  can produce
$q$ directly from $p_0$ without enumerating any other poses or
configurations. The conditional formulation is advantageous
because it produces a paired inverse kinematics configuration quickly
and without substantially expanding the size of the problem. 

Table~\ref{fig:countable_table} validates this intuition though an
experiment comparing these stream specifications.  The initial pose of
the object $p_0$ is chosen from $1, 100, 1000$.  All trials have a timeout of 120 seconds and use the
\eager{} algorithm with $K=1$ implemented in Python.
As predicted, the \proc{Kin-U} and \proc{Kin-T}
streams require many more calls than \proc{Kin-C} as $p_0$ increases
and lead to substantially longer runtimes for a very simple problem.


\begin{figure}[ht]
\centering
\includegraphics[width=0.23\textwidth]{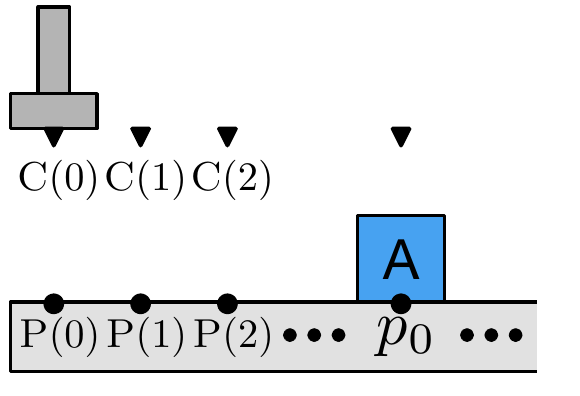}
\includegraphics[width=0.23\textwidth]{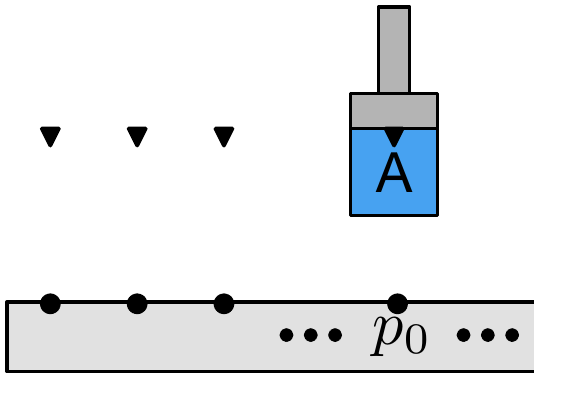}
\caption{The initial state and goal state in an infinite, discrete pick-and-place problem requiring picking block A. } 
\label{fig:countable_tamp}
\end{figure}

\begin{table}[t]
\begin{footnotesize}
\begin{tabular}{||c||g|c|c||g|c|c||g|c|c||}
\hline
$p_0$ & \multicolumn{3}{|c||}{\proc{Kin-U}}&\multicolumn{3}{|c||}{\proc{Kin-T}}&\multicolumn{3}{|c||}{\proc{Kin-C}}\\
\hline
&
t & i & c &
t & i & c &
t & i & c \\
\hline
1 & 
.1 & 3 & 2 &
.2 & 6 & 9 &
.1 & 3 & 2 \\
\hline
100 & 
29 & 102 & 101 &
71 & 303 & 10360 &
.1 & 3 & 2 \\
\hline
1000 & 
- & 180 & 179 &
- & 381 & 16383 &
.1 & 3 & 2 \\
\hline
\end{tabular}
\end{footnotesize}
\caption{The runtime (t), number of search iterations (i), and number of
generator calls (c) for the countable pick-and-place \proc{Kin} stream representation experiment.}
\label{table}
\label{fig:countable_table}
\end{table}

\section{Continuous domains}

The \algname{} approach can be applied directly in continuous domains such as the problem in figure~\ref{fig:obstruction}.
In this case, the streams will have to generate samples from sets of continuous
dimensions, and the way that samples are generated may have a
significant impact on the efficiency and completeness of the approach
with respect to the domain problem.  (Note that the \algname{} planing
algorithms are complete with respect to the streams of enumerated
values they are given, but if these value streams are not, in some
sense, complete with respect to the underlying problem domain, then
the resulting combined system may not be complete with respect to the
original problem.)  Samplers that produce a {\em dense}
sequence~\cite{Lavalle06} are good candidates for stream generation.

\subsection{Continuous stream specification}

With some minor modifications, we can extend our discrete
pick-and-place domain to a bounded interval $[0, L]$ of the real
line.  Poses and configurations are now continuous objects
$p, q \in [0, L]$ from an uncountably infinite domain.  
The stream \proc{Pose-U} now has a generator that samples $[0, L]$
uniformly at random.  

While in the discrete case the choice of streams just affected the
size of the problem, in the continuous case, the choice of streams can
affect the feasibility of the problem. 
In the continuous simple pick-and-place domain, suppose that the blocks have width 1 and the
gripper has width $\delta \geq 1$.  A kinematics pair $(p,q)$ is valid
if and only if the gripper is entirely over the block, i.e.,
$p + 1/2 \leq q + \delta/2$ and $p - 1/2 \geq q -\delta/2$.  Consider
the case where \proc{Kin-U} and \proc{Kin-C} are implemented using
random samplers.  \proc{Kin-U} will almost certainly generate a
sequence of infeasible \strips{} problems, because the probability
that the point $p_0$ is produced from its generator is zero.  For
$\delta > 1$, the configuration stream has nonzero probability of
generating a $q$ that would constitute a valid kinematics pair with
$p$ as certified by \proc{Kin-T}. But this probability can be made
arbitrary small as $\delta \to 1$. Only the \proc{Kin-C} strategy
is robust to the choice of $\delta$. Table~\ref{fig:continuous_table}
shows the results of an experiment analogous to the one in
table~\ref{fig:countable_table}, but which varies
$\delta \in \{1.5, 1.01\}$ instead of varying $p_0$. \proc{Kin-U} was unable
to solve either problem and \proc{Kin-T} could not find a solution in
under two minutes for $\delta = 1.01$. But once again, the conditional
formulation using \proc{Kin-C} performs equivalently for different
values of $\delta$.

\begin{table}[t]
\begin{footnotesize}
\begin{tabular}{||c||g|c|c||g|c|c||g|c|c||}
\hline
$\delta$ & \multicolumn{3}{|c||}{\proc{Kin-U}}&\multicolumn{3}{|c||}{\proc{Kin-T}}&\multicolumn{3}{|c||}{\proc{Kin-C}}\\
\hline
& 
t & i & c &
t & i & c &
t & i & c \\
\hline
1.5 & 
- & 191 & 190 &
3.1 & 75 & 745 &
.1 & 2 & 1 \\
\hline
1.01 & 
- & 181 & 180 &
- & 297 & 18768 &
.1 & 2 & 1 \\
\hline
\end{tabular}
\end{footnotesize}
\caption{The runtime (t), number of search iterations (i), and number of generator calls (c) for the continuous pick-and-place \proc{Kin} stream representation experiment.}
\label{table}
\label{fig:continuous_table}
\end{table}

\subsection{Focused algorithm example}

\begin{figure*}[ht]
\centering
\includegraphics[width=0.23\textwidth]{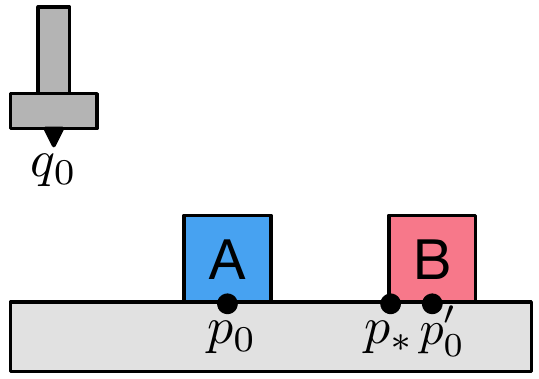}
\includegraphics[width=0.23\textwidth]{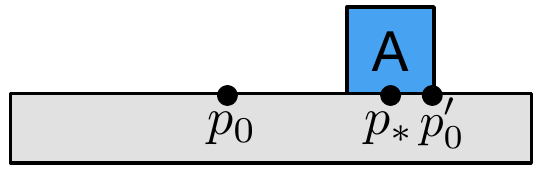}
\includegraphics[width=0.23\textwidth]{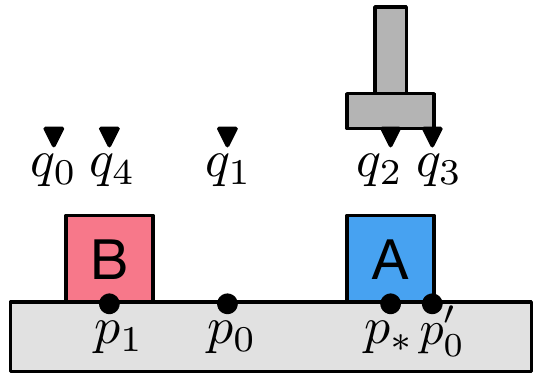}
\caption{Initial state for countable pick-and-place problem requiring
  picking and placing block A, with a single obstacle.} 
\label{fig:obstruction}
\end{figure*}

The previous examples investigated the effect of different representational choices on the tractability and even feasibility of the resulting \algname{} problem. 



The example in figure~\ref{fig:obstruction} illustrates the behavior
of the focused algorithm on continuous a pick-and-place problem with the
goal condition that block $A$ is at pose $p_*$. Because block A, when
at $p_*$, collides with block B at its initial pose $p_0'$, solving
this problem requires
moving block $B$ out of the way to place block $A$. Suppose we use
\proc{Kin-C} to model the problem.  We will omit \proc{Move} operators
for the sake of clarity,  and use capital letters to denote abstract
objects. On the first iteration, the \focused{} algorithm will produce the
following plan (possibly ordered slightly differently):

\begin{footnotesize}
\begin{align*}
\pi_1 &=  \big(\proc{Kin-C}(Q_1 \mid p_0), \proc{Pick}(A, p_0, Q_1), \proc{Kin-C}(Q_2 \mid p_*), \\
& \proc{CFree-T}(() \mid B, p_0', A, p_*), \proc{Place}(A, p_*, Q_2)\big)
\end{align*}
\end{footnotesize}

The generation of values proceeds as follows.
$\proc{Kin-C}(Q_1 \mid p_0)$ will produce $Q_1 \leftarrow
q_1$. $\proc{Kin-C}(Q_2 \mid p_*)$ will produce $Q_2 \leftarrow
q_2$. However, $\proc{CFree-T}(() \mid B, p_0', A, p_0)$ will produce the
empty stream because $p_0'$ collides with $p_*$. Thus, the plan
$\pi_1$ definitively cannot be completed. The algorithm adds $q_1$ and
$q_2$ to the current \pddl{} problem and records the failure of
$\proc{CFree-T}(() \mid B, p_0', A, p_0)$. 
On the next iteration, the \focused{} algorithm will produce the
following plan. 

\begin{footnotesize}
\begin{align*}
\pi_2 &=  \big(\proc{Kin-C}(Q_1 \mid p_0'), \proc{Pick}(B, p_0', Q_1), \proc{Pose-U}(P_1 \mid ()), \\
& \proc{Kin-C}(Q_2 \mid P_1), \proc{CFree-T}(() \mid A, p_0, B, P_1), \proc{Place}(B, P_1, Q_2), \\
& \proc{Pick}(A, p_0, q_1), \proc{CFree-T}(() \mid B, P_1, A, p_*), \proc{Place}(A, p_*, q_2) \big)
\end{align*}
\end{footnotesize}

The generation of values proceeds as follows.
$\proc{Kin-C}(Q_1 \mid p_0')$ will produce $Q_1 \leftarrow
q_3$. $\proc{Pose-U}(P_1 \mid ())$ will produce $P_1 \leftarrow p_1$.
$\proc{Kin-C}(Q_2 \mid p_1)$ will produce $Q_2 \leftarrow q_4$.  Let's
assume that $P_1 \leftarrow p_1$ is randomly sampled and turns out to
not be in collision with $p_*$. If $p_1$ turned out to be in collision
with $p_*$, the next iteration would first fail once, then repeat this
process on the next iteration to generate a new $P_1$.  So,
$\proc{CFree-T}(() \mid A, p_1, B, p_0)$ will produce the stream
$\langle () \rangle$ indicating that $p_1$ and $p_0$ are not in
collision.  Thus, all of the properties have been successfully
satisfied, so the following plan is a solution.
It is critical to note that, for example, had there been several other
pose constants appearing in the initial state, \focused{} would never have
found inverse kinematic solutions for them:
because the planner guides the sampling, only stream elements that
play a direct role in a plausible plan are generated.

\begin{footnotesize}
\begin{align*}
\pi_* &=  \big(\proc{Pick}(B, p_0', q_3), \proc{Place}(B, p_1, q_4), \proc{Pick}(A, p_0, q_1), \\
& \proc{Place}(A, p_*, q_2)\big)
\end{align*}
\end{footnotesize}
\section{Realistic robot domain}


\begin{table*}
\centering
\begin{footnotesize}
\begin{tabular}{||c||c|g|c|c||c|g|c|c||c|g|c|c||}
\hline
$\Pi$ & \multicolumn{4}{|c||}{\eager{}, $K=1$}&\multicolumn{4}{|c||}{\eager{}, $K=100$}&\multicolumn{4}{|c||}{\focused{}}\\
\hline
&
\% & t & i & c &
\% & t & i & c &
\% & t & i & c \\
\hline
1 & 
88 & 2 & 23 & 268 &
68 & 5 & 2 & 751 &
84 & 11 & 6 & 129 \\
\hline
2-0 & 
100 & 23 & 85 & 1757 &
100 & 9 & 3 & 2270 &
100 & 2 & 3 & 180 \\
\hline
2-8 & 
0 & - & - & - &
100 & 55 & 5 & 17217 &
100 & 7 & 3 & 352 \\
\hline
2-16 & 
0 & - & - & - &
100 & 112 & 6 & 36580 &
100 & 19 & 3 & 506 \\
\hline
\end{tabular}
\end{footnotesize}
\caption{The success percentage (\%), runtime (t), search iterations (i), and
number of stream calls (c) for the high-dimensional task and motion planning experiments.}
\label{table:results}
\end{table*}


Finally, we extend our continuous pick-and-place to the
high-dimensional setting of a robot operating in household-like
environments.  Poses of physical blocks are 6-dimensional and robot
configurations are 11-dimensional.  We introduce two new object types:
grasps and trajectories.  Each block has a set of 6D relative grasp
transforms at 
which it can be grasped by the robot.  Trajectories are
finite sequences of configuration waypoints which must be included in
collision checking.
The extended \proc{Pick} operator, \proc{CFree-T} test and \proc{Kin-C}
stream templates are:

\begin{footnotesize}
\begin{codebox}
\Procname{\proc{Pick}$(B, P, G, Q, T)$:}
\zi \kw{stat} = $\{\id{IsBlock}(B), ..., \id{IsTraj}(T), \id{IsKin(P, G, Q, T)}\}$
\zi \kw{pre} = $\{\id{AtPose}(B, P), \id{HandEmpty}(), \id{AtConfig}(Q)\} \;\cup$ \Indentmore
\zi $\{\id{Safe}(b', B, G, T) \mid b' \in {\cal B}\}$ \End
\zi \kw{eff} = $\{\id{Holding}(B, G), \neg \id{AtPose}(B, P), \neg \id{HandEmpty}()\}$
\end{codebox}
\begin{codebox}
\Procname{\proc{CFree-T}$(() \mid B_1, P_1, B_2, G, T)$:}
\zi \kw{gen} = $\kw{lambda} (B_1, P_1, B_2, G, T):$ 
\zi \>\>\> $\langle () \;\kw{if} \;\kw{not}\; \proc{collide}(B_1, P_1, B_2, G, T) \rangle$
\zi \kw{inp} = $\{\id{IsBlock}(B_1),...,  \id{IsTraj}(T)\}$
\zi \kw{out} = $\{\id{IsCollisionFree}(B_1, P_1, B_2, G, T)\}$
\end{codebox}
\begin{codebox}
\Procname{\proc{Kin-C}$(Q, T \mid P, G)$:}
\zi \kw{gen} = $\kw{lambda} (P): \langle (Q, T) \mid Q \sim \proc{inverse-kin}(PG^{-1}), $\Indentmore
\zi $T \sim \proc{motions}(q_{rest}, Q) \rangle$ \End
\zi \kw{inp} = $\{\id{IsPose}(P), \id{IsGrasp}(G)\}$
\zi \kw{out} = $\{\id{IsKin}(P, G, Q, T), \id{IsConf}(Q), \id{IsTraj}(T)\}$
\end{codebox}
\end{footnotesize}

\proc{Pick} adds grasp $G$ and trajectory $T$ as parameters and includes
$\id{Safe}(b', B, G, T)$ preconditions to verify that $T$ while holding
$B$ at grasp $G$ is safe with respect to each other block $b'$.
$\id{Safe}(b', B, G, T)$ is updated using \proc{SafeAxiom} which has a
$\id{IsCollisionFree}(B_1, P_1, B_2, G, T)$ static precondition.
Here, a collision check for block $B_1$ at pose $P_1$ is performed for
each configuration in $T$. Instead of simple
blocks, physical objects in this domain are general unions of convex
polygons. Although checking collisions here is more
complication than in 1D, it can be treated in the same way, as an
external function.

The \proc{Kin} streams must first produce a grasp configuration $Q$
that reaches manipulator transform $PQ^{-1}$ using \proc{inverse-kin}. Additionally, they include 
a motion planner \proc{motions} to generate legal trajectory values $T$ from a constant rest configuration $q_{rest}$
to the grasping configuration $Q$ that do not
collide with the fixed environment. In this domain, the procedures for collision checking and 
finding kinematic solutions are significantly more involved and
computationally expensive than in the previous domains, but their
underlying function is the same.

\subsection{Experiments}

We applied the \eager{} and \focused{} algorithms on four challenging
pick-and-place problems to demonstrate that a general-purpose
representation and algorithms can be used to achieve good performance
in difficult problems.  For both algorithms, test streams are
always evaluated as soon as they are instantiated.  
We experimented on two domains shown in
figure~\ref{fig:tamp_domains}, which are similar to problems
introduced by~\cite{GarrettIROS15}. The first domain, in which problem 1 is
defined, has goal conditions that the green object be in the right bin
and the blue object remain at its initial pose. This requires the
robot to not only move and replace the blue block but also to place
the green object in order to find a new grasp to insert it into the
bin.  The second domain, in which problems 2-0, 2-8, and 2-16 are
defined, requires moving an object out of the way and placing the
green object in the green region. For problem 2-$n$ where
$n \in \{0, 8, 16\}$ there are $n$ other blocks on a separate table
that serve as distractors. The streams were implemented using
the OpenRAVE robotics framework~\cite{openrave}.
A Python implementation of \algname{} can be found here:
\texttt{https://github.com/caelan/stripstream}.

\begin{figure}[h]
\centering
\includegraphics[width=0.30\textwidth]{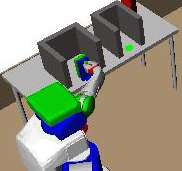}
\caption{Problem 1.} 
\label{fig:tamp_domains}
\end{figure}

The results compare the \eager{} algorithm where $K=1$ and $K=100$
with the \focused{} algorithm.  Table~\ref{table:results} shows the
results of 25 trials, each with a timeout of 120 seconds.
The \eager{} algorithms result in
significantly more stream calls than the focused algorithm.
These calls can significantly increase the total runtime
because each inverse kinematic and collision primitive itself is expensive.
Additionally, the \eager{} algorithms are significantly affected by
the increased number of distractors, making them unsuitable for
complex real-world environments. The \focused{} algorithm, however, is
able to selectively choose which streams to call resulting in
significantly better performance in these environments.

\section{Conclusion}

The \algname{} problem specification formalism can be used to describe
a large class of planning problems in infinite domains and provides a
clear and clean interface to problem-specific sampling methods in
continuous domains.  
The \eager{} and, in particular, \focused{} planning algorithms
take advantage of the specification to provide efficient solutions to
difficult problems.

\newpage
\bibliographystyle{named}
\bibliography{references}

\begin{thebibliography}{}

\bibitem[\protect\citeauthoryear{Bohlin and Kavraki}{2000}]{bohlin2000path}
Robert Bohlin and Lydia~E Kavraki.
\newblock Path planning using lazy {PRM}.
\newblock In {\em IEEE International Conference on Robotics and Automation
  (ICRA)}, volume~1, pages 521--528. IEEE, 2000.

\bibitem[\protect\citeauthoryear{Bonatti \bgroup \em et al.\egroup
  }{2010}]{bonatti2010answer}
Piero Bonatti, Francesco Calimeri, Nicola Leone, and Francesco Ricca.
\newblock Answer set programming.
\newblock In {\em A 25-year perspective on logic programming}, pages 159--182.
  Springer-Verlag, 2010.

\bibitem[\protect\citeauthoryear{Bryce \bgroup \em et al.\egroup
  }{2015}]{bryce2015smt}
Daniel Bryce, Sicun Gao, David~J Musliner, and Robert~P Goldman.
\newblock {SMT}-based nonlinear {PDDL+} planning.
\newblock In {\em AAAI}, 2015.

\bibitem[\protect\citeauthoryear{Calimeri \bgroup \em et al.\egroup
  }{2009}]{calimeri2009asp}
Francesco Calimeri, Susanna Cozza, Giovambattista Ianni, and Nicola Leone.
\newblock An asp system with functions, lists, and sets.
\newblock In {\em International Conference on Logic Programming and
  Nonmonotonic Reasoning}, pages 483--489. Springer, 2009.

\bibitem[\protect\citeauthoryear{Cashmore \bgroup \em et al.\egroup
  }{2016}]{cashmore2016compilation}
Michael Cashmore, Maria Fox, Derek Long, and Daniele Magazzeni.
\newblock A compilation of the full pddl+ language into smt.
\newblock In {\em International Conference on Automated Planning and Scheduling
  (ICAPS)}, pages 79--87. AAAI Press, 2016.

\bibitem[\protect\citeauthoryear{Coles \bgroup \em et al.\egroup
  }{2013}]{coles2013hybrid}
Amanda Coles, Andrew Coles, Maria Fox, and Derek Long.
\newblock A hybrid {LP-RPG} heuristic for modelling numeric resource flows in
  planning.
\newblock {\em Journal of Artificial Intelligence Research (JAIR)},
  46(1):343--412, 2013.

\bibitem[\protect\citeauthoryear{Dantam \bgroup \em et al.\egroup
  }{2016}]{dantam2016tmp}
Neil~T. Dantam, Z.~Kingston, Swarat Chaudhuri, and Lydia~E. Kavraki.
\newblock Incremental task and motion planning: A constraint-based approach.
\newblock In {\em Robotics: Science and Systems (RSS)}, 2016.

\bibitem[\protect\citeauthoryear{Della~Penna \bgroup \em et al.\egroup
  }{2009}]{della2009upmurphi}
Giuseppe Della~Penna, Daniele Magazzeni, Fabio Mercorio, and Benedetto
  Intrigila.
\newblock Upmurphi: A tool for universal planning on pddl+ problems.
\newblock In {\em International Conference on Automated Planning and Scheduling
  (ICAPS)}, 2009.

\bibitem[\protect\citeauthoryear{Dellin and
  Srinivasa}{2016}]{dellin2016unifying}
Christopher~M Dellin and Siddhartha~S Srinivasa.
\newblock A unifying formalism for shortest path problems with expensive edge
  evaluations via lazy best-first search over paths with edge selectors.
\newblock {\em International Conference on Automated Planning and Scheduling
  (ICAPS)}, 2016.

\bibitem[\protect\citeauthoryear{Diankov and Kuffner}{2008}]{openrave}
Rosen Diankov and James Kuffner.
\newblock Openrave: A planning architecture for autonomous robotics.
\newblock Technical Report CMU-RI-TR-08-34, Robotics Institute, Carnegie Mellon
  University, 2008.

\bibitem[\protect\citeauthoryear{Dornhege \bgroup \em et al.\egroup
  }{2009}]{dornhege09icaps}
Christian Dornhege, Patrick Eyerich, Thomas Keller, Sebastian Tr{\"u}g, Michael
  Brenner, and Bernhard Nebel.
\newblock Semantic attachments for domain-independent planning systems.
\newblock In {\em International Conference on Automated Planning and Scheduling
  (ICAPS)}, pages 114--121. AAAI Press, 2009.

\bibitem[\protect\citeauthoryear{Erdem \bgroup \em et al.\egroup
  }{2011}]{Erdem}
Esra Erdem, Kadir Haspalamutgil, Can Palaz, Volkan Patoglu, and Tansel Uras.
\newblock Combining high-level causal reasoning with low-level geometric
  reasoning and motion planning for robotic manipulation.
\newblock In {\em IEEE International Conference on Robotics and Automation
  (ICRA)}, 2011.

\bibitem[\protect\citeauthoryear{Fox and Long}{2003}]{Fox03pddl2.1:an}
Maria Fox and Derek Long.
\newblock Pddl2.1: An extension to {PDDL} for expressing temporal planning
  domains.
\newblock {\em Journal of Artificial Intelligence Research (JAIR)}, 20:2003,
  2003.

\bibitem[\protect\citeauthoryear{Fox and Long}{2006}]{fox2006modelling}
Maria Fox and Derek Long.
\newblock Modelling mixed discrete-continuous domains for planning.
\newblock {\em J. Artif. Intell. Res.(JAIR)}, 27:235--297, 2006.

\bibitem[\protect\citeauthoryear{Garrett \bgroup \em et al.\egroup
  }{2015}]{GarrettIROS15}
Caelan~Reed Garrett, Tom{\'a}s Lozano-P{\'e}rez, and Leslie~Pack Kaelbling.
\newblock Backward-forward search for manipulation planning.
\newblock In {\em IEEE/RSJ International Conference on Intelligent Robots and
  Systems (IROS)}, 2015.

\bibitem[\protect\citeauthoryear{Garrett \bgroup \em et al.\egroup
  }{2016}]{garrett2016ffrob}
Caelan~Reed Garrett, Tomas Lozano-Perez, and Leslie~Pack Kaelbling.
\newblock {FFRob}: Leveraging symbolic planning for efficient task and motion
  planning.
\newblock {\em arXiv preprint arXiv:1608.01335}, 2016.

\bibitem[\protect\citeauthoryear{Helmert}{2006}]{helmert2006fast}
Malte Helmert.
\newblock The fast downward planning system.
\newblock {\em Journal of Artificial Intelligence Research (JAIR)},
  26:191--246, 2006.

\bibitem[\protect\citeauthoryear{Hoffmann and
  others}{2003}]{hoffmann2003metric}
J{\"o}rg Hoffmann et~al.
\newblock The {Metric-FF} planning system: Translating ''ignoring delete
  lists'' to numeric state variables.
\newblock {\em J. Artif. Intell. Res.(JAIR)}, 20:291--341, 2003.

\bibitem[\protect\citeauthoryear{Kaelbling and Lozano-P{\'e}rez}{2011}]{HPN}
Leslie~Pack Kaelbling and Tom{\'a}s Lozano-P{\'e}rez.
\newblock Hierarchical planning in the now.
\newblock In {\em IEEE International Conference on Robotics and Automation
  (ICRA)}, 2011.

\bibitem[\protect\citeauthoryear{{LaValle}}{2006}]{Lavalle06}
Steven~M. {LaValle}.
\newblock {\em Planning Algorithms}.
\newblock Cambridge University Press, 2006.

\bibitem[\protect\citeauthoryear{Piotrowski \bgroup \em et al.\egroup
  }{2016}]{piotrowski2016heuristic}
Wiktor Piotrowski, Maria Fox, Derek Long, Daniele Magazzeni, and Fabio
  Mercorio.
\newblock Heuristic planning for pddl+ domains.
\newblock In {\em Proceedings of the Twenty-Fifth International Joint
  Conference on Artificial Intelligence (IJCAI)}, 2016.

\bibitem[\protect\citeauthoryear{Srivastava \bgroup \em et al.\egroup
  }{2014}]{Srivastava14}
Siddharth Srivastava, Eugene Fang, Lorenzo Riano, Rohan Chitnis, Stuart
  Russell, and Pieter Abbeel.
\newblock Combined task and motion planning through an extensible
  planner-independent interface layer.
\newblock In {\em IEEE International Conference on Robotics and Automation
  (ICRA)}, 2014.

\end{thebibliography}

\newpage

\section{Appendix}


\begin{thm}  \label{thm:re}
${\cal O}_\Pi$ and ${\cal S}_\Pi$ are recursively enumerable (RE). 
\end{thm}
\begin{proof}
Consider an enumeration procedure for ${\cal O}_\Pi$ and ${\cal S}_\Pi$:
\begin{itemize}
\item The first sequences of elements in ${\cal O}_\Pi$ and 
${\cal S}_\Pi$ are ${\cal C}_0 \cup O_0$ and $s_0$ respectively.
\item Initialize a set of stream instances $\Sigma_\Pi = \emptyset$.
\item Repeat:
\begin{itemize}
\item For each stream schema $\sigma \in \Sigma$, add all
  instantiations $\sigma(\bar{Y} \mid \bar{x})$ where $x \subseteq {\cal O}_\Pi$ such that
  $\sigma.\kw{inp}(\bar{x})$ is contained within ${\cal S}_\Pi$,
  to $\Sigma_\Pi$. There are finitely many new elements of
  $\Sigma_\Pi$.
\item For each stream instance $\sigma(\bar{Y} \mid \bar{x}) \in \Sigma_\Pi$, add
  $\bar{y} = \proc{next}(\sigma.f(\bar{x}))$ to ${\cal O}_\Pi$ and add
  $\sigma^*.\kw{out}((\bar{x}, \bar{y}))$ to ${\cal S}_\Pi$.
  There are finitely many new elements of ${\cal O}_\Pi$ and ${\cal S}_\Pi$.
\end{itemize}
\end{itemize}
This procedure will enumerate all possible objects and all possible
initial atoms generated within the problem $\Pi$.
\end{proof}

\begin{thm}
The existence of a solution for a \algname{} problem $\Pi$ is
undecidable. 
\end{thm}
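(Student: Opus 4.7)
The plan is a many-one reduction from the halting problem for Turing machines. Given an arbitrary TM $M$ and input $w$, I will construct a \algname{} problem $\Pi_{M,w}$ whose solvability is equivalent to the assertion that $M$ halts on $w$. Since the halting problem is undecidable, this yields undecidability of solution existence for \algname{}. The construction exploits the fact that streams are defined via generators implemented in the host programming language, so their \kw{gen} procedures may perform arbitrary computation.

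First, I would define a single static predicate $\id{Halts}$ and introduce one test stream $\sigma(() \mid ())$ with no inputs and no outputs, whose generator simulates $M$ on $w$ step by step; if and only if $M$ eventually halts, the generator yields the single element $()$. Declare $\sigma.\kw{out} = \{\id{Halts}\}$, so that the degenerate tuple $()$ certifies $\id{Halts}$ in ${\cal S}_\Pi$. Second, I would specify a minimal planning domain: one fluent predicate $\id{Goal}$, initial state $s_0 = \emptyset$, goal $s_* = \{\id{Goal}\}$, and a single operator schema $\proc{Done}()$ with $\kw{stat} = \{\id{Halts}\}$, empty preconditions, and $\kw{eff} = \{\id{Goal}\}$. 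The initial object set is empty, and $\Sigma = \{\sigma\}$.

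The correctness of the reduction would be verified in two directions. Any solution plan $\pi_*$ must include an instance of $\proc{Done}$, since this is the only operator that can establish $\id{Goal}$; applicability of $\proc{Done}$ requires $\id{Halts} \in {\cal S}_\Pi$. By Theorem~\ref{thm:re} and the definition of $\sigma$, $\id{Halts}$ appears in ${\cal S}_\Pi$ exactly when $\proc{next}(\sigma.f(()))$ produces $()$, which by construction happens iff the simulation of $M$ on $w$ terminates. Conversely, if $M$ halts on $w$, then $\sigma$ eventually certifies $\id{Halts}$ and the one-step plan $\langle \proc{Done}() \rangle$ is a solution. Thus $\Pi_{M,w}$ is solvable iff $M$ halts on $w$.

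The main conceptual point, and the only place where one must be careful, is to justify that the reduction respects the official semantics of \algname{}: a stream whose generator never returns is permitted (it simply never contributes an element to ${\cal S}_\Pi$), and solvability is defined as the existence of a finite plan whose object parameters lie in ${\cal O}_\Pi$ and whose preconditions are satisfied by ${\cal S}_\Pi$, not as the existence of a terminating enumeration procedure. Under this semantics the reduction goes through cleanly, and since the mapping $(M,w) \mapsto \Pi_{M,w}$ is plainly computable, undecidability follows.
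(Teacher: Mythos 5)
Your proposal is correct in its essentials and takes the same route as the paper: a many-one reduction from the halting problem in which a stream's generator simulates the Turing machine and a static predicate certified by that stream gates the only goal-achieving operator. The one place your gadget differs is worth flagging. You concentrate the entire simulation into a single call to $\proc{next}$, so in the non-halting case $\proc{next}$ never returns; you acknowledge this and argue it is admissible because solvability is a property of the (computably described) instance rather than of a terminating enumeration. That reading does make the reduction go through, but it strains the paper's generator semantics, under which $\proc{next}$ is a procedure that \emph{returns} either a tuple or $\kw{None}$ --- i.e., is total --- and totality is what the companion RE and semi-decidability theorems rely on. The paper's construction avoids the issue entirely: its stream $\proc{Reachable\mbox{-}U}$ simulates exactly one step of the machine per call and emits the resulting TM state each time, so every call terminates and the stream is a legitimate (infinite) generator; the goal then asks that the accept state be $\id{Reached}$. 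If you want your version to live cleanly inside the paper's framework, replace your all-at-once test stream with this one-step-per-call enumeration of states; the rest of your argument (only $\proc{Done}$/$\proc{Halt}$ can achieve the goal, and its static precondition enters ${\cal S}_\Pi$ iff the machine halts) carries over unchanged.
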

\begin{proof}
We use a reduction from the halting problem.
Given a Turning machine TM, we construct a \algname{} 
problem $\Pi_{TM}$ with a single operator
\proc{Halt}(X) with $\kw{stat} = \{\id{IsReachable}(X)\}$, 
$\kw{pre} = \emptyset$, and $\kw{eff} = \{\id{Reached}(X)\}$ where
$\id{IsReachable}$ and $\id{Reached}$ are a static and fluent predicate
defined on TM's states. There is a single unconditional stream $\proc{Reachable-U}(X \mid ())$ which 
enumerates the states of TM by simulating one step of TM upon each call.
Let $s_0 = \emptyset$ and $s_* = \{\id{Reached}(a)\}$
where $a$ is the accept state for TM. $\Pi_{TM}$ has a solution if and only if TM halts.
Thus, \algname{} is undecidable.
\end{proof}

\begin{thm}
The existence of a solution for a \algname{} problem $\Pi$ is
semi-decidable. 
\end{thm}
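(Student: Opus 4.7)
The plan is to exhibit an algorithm that halts and reports success whenever a solution exists, leveraging Theorem~\ref{thm:re} directly. Let $\proc{enum}$ denote the enumeration procedure from the proof of Theorem~\ref{thm:re}; run it in the background, maintaining after its $n$-th iteration finite sets ${\cal O}_n \subseteq {\cal O}_\Pi$ and ${\cal S}_n \subseteq {\cal S}_\Pi$ with ${\cal O}_n \uparrow {\cal O}_\Pi$ and ${\cal S}_n \uparrow {\cal S}_\Pi$. After each iteration, construct the finite \strips{} problem $({\cal P}_s \cup {\cal P}_f, {\cal A} \cup {\cal X}, {\cal O}_n, {\cal S}_n, s_*)$ and invoke $\proc{s-plan}$ on it. Because ${\cal O}_n$ and ${\cal S}_n$ are finite, this is an ordinary finite-domain \strips{} problem, which $\proc{s-plan}$ decides soundly in finite time. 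If a plan is returned, halt and output it; otherwise move on to iteration $n{+}1$.

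The correctness argument has two halves. \emph{Soundness}: any plan $\pi$ returned uses operator instances whose parameters lie in ${\cal O}_n \subseteq {\cal O}_\Pi$ and which are applicable from ${\cal S}_n \subseteq {\cal S}_\Pi$; by the \strips{} semantics it reaches $s_*$, so by the definition of a \algname{} solution it is a valid solution to $\Pi$. \emph{Semi-completeness}: suppose $\pi_*$ is a solution to $\Pi$. Then $\pi_*$ is a finite sequence of operator instances mentioning finitely many objects $O(\pi_*) \subseteq {\cal O}_\Pi$, and its applicability and goal-achievement depend on a finite set of static and fluent atoms $S(\pi_*) \subseteq {\cal S}_\Pi$. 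By Theorem~\ref{thm:re}, every element of $O(\pi_*) \cup S(\pi_*)$ appears in the enumeration after finitely many steps, so there exists $n_0$ with $O(\pi_*) \subseteq {\cal O}_{n_0}$ and $S(\pi_*) \subseteq {\cal S}_{n_0}$. At iteration $n_0$, the constructed finite \strips{} problem admits $\pi_*$ as a solution, and since $\proc{s-plan}$ is complete, it returns some plan (possibly $\pi_*$ itself or another witness), so the procedure halts.

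The main subtlety is ensuring that the finite \strips{} subproblem at stage $n_0$ really does contain enough information to certify $\pi_*$: the static atoms required by the preconditions of each operator instance in $\pi_*$ must already be in ${\cal S}_{n_0}$, not merely derivable. This is exactly guaranteed by the enumeration in Theorem~\ref{thm:re}, which accumulates all certifications $\sigma.\kw{out}((\bar x,\bar y))$ produced by applicable stream instances; once the requisite tuples have been drawn from the relevant streams, their certifications are permanent members of ${\cal S}_\Pi$. A fair enumeration strategy (e.g.\ the diagonalization implicit in the repeat loop of Theorem~\ref{thm:re}) ensures that every reachable object and every certifying atom is produced in finite time, which is all that semi-decidability requires. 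We note that this abstract procedure is essentially what the \eager{} algorithm implements in practice, so the completeness statement for \eager{} (proved later in the appendix) gives an alternative, constructive witness for semi-decidability.
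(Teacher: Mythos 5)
Your proposal is correct and follows essentially the same route as the paper: use the recursive enumeration from Theorem~\ref{thm:re} to produce a sequence of finite \strips{} problems, decide each one with a sound and complete finite-domain planner, and halt when one becomes solvable, which must happen because any solution mentions only finitely many objects and supporting static atoms. Your added soundness check and the explicit observation that the required certifications must already appear in ${\cal S}_{n_0}$ are fine elaborations but do not change the argument.
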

\begin{proof}
From the recursive enumeration of ${\cal O}_\Pi$ and ${\cal S}_\Pi$ we produce a
recursive enumeration of finite planning problems. Planning problem
$i$ is grounded using all objects and static atoms enumerated up through element $i$.
Plan existence in a finite universe is decidable. 
Thus, for feasible problems, applying a finite decision procedure to the sequence of finite planning problems
will eventually reach a planning problem for which a plan exists and
produce it.
\end{proof}

\begin{thm}
The \eager{} algorithm is complete.
\end{thm}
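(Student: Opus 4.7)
The plan is to reduce completeness of the \eager{} planner to the recursive enumeration established in Theorem~\ref{thm:re} together with soundness and completeness of \proc{s-plan}. First I would fix an arbitrary solution $\pi_*$ of the \algname{} problem $\Pi$; by definition its operator parameters lie in ${\cal O}_\Pi$ and its static preconditions are satisfied in ${\cal S}_\Pi$. The goal is to show that in finite time the algorithm reaches an iteration where ${\cal O} \supseteq \mathrm{params}(\pi_*)$ and ${\cal S}$ contains all the static atoms needed by $\pi_*$; at that iteration the \strips{} subproblem handed to \proc{s-plan} has $\pi_*$ as a feasible plan, so by completeness of \proc{s-plan} some (possibly different) plan is returned, and soundness of \proc{s-plan} guarantees it is a valid solution.

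The main technical step is to relate the incremental updates performed by \proc{\eager{}} to the enumeration procedure in the proof of Theorem~\ref{thm:re}. I would proceed by induction on the ``depth'' of a generated object or atom, where depth~$0$ consists of ${\cal C}_0 \cup O_0$ and $s_0$, and depth $d+1$ consists of objects and atoms produced by a single \proc{next} call on a stream instance whose inputs all lie at depth $\leq d$. The induction hypothesis is that every object and atom of depth~$\leq d$ is added to ${\cal O}$ and ${\cal S}$ within finitely many iterations of the outer \While{} loop.

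The key lemma to make this induction go through is a queue-fairness claim: every stream instance that is ever pushed into $Q$ is eventually popped (possibly many times). This follows because (i) \proc{\eager{}} uses $Q$ as a FIFO, (ii) whenever a non-exhausted stream is popped it is pushed back at the tail, and (iii) each iteration pops $\min(K,|Q|) \geq 1$ instances. Therefore if $\sigma(\bar Y \mid \bar x)$ is at position $p$ in the queue, it is popped within $\lceil p/K \rceil$ iterations, and after each pop its new position is bounded by the current queue length, which grows by at most finitely many instances per iteration (since $\Sigma$ is finite and at most finitely many new tuples $\bar{x}'$ can be formed from the current finite ${\cal O}$). Hence every stream instance is popped infinitely often until exhausted, so every tuple it can ever emit is eventually produced. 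Combined with the induction: once the finitely many depth-$\leq d$ objects are present, every stream instance $\sigma(\bar Y \mid \bar x)$ with inputs at depth $\leq d$ is pushed into $Q$ (by the loop that scans $\Sigma$ and ${\cal O}^{|\bar X'|}$), is eventually popped sufficiently many times, and so emits each of its depth-$(d{+}1)$ tuples in finite time.

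The hard part will be nailing down this fairness/ordering argument cleanly, since the queue length is unbounded and new stream instances are being inserted at every iteration; I would make the inductive statement quantitative (within $N(d)$ iterations, all depth-$\leq d$ data are present) so that there is no circularity. Once this is in place, $\pi_*$ uses only finitely many objects and finitely many certified static atoms, each of bounded depth $d_*$, so after $N(d_*)$ iterations the \strips{} subproblem contains $\pi_*$ as a feasible plan, and completeness of \proc{s-plan} closes the proof. Note also that the \While{}-loop only returns \kw{None} when $Q$ is empty, which by the fairness argument cannot occur before every object of every depth has been generated; so the algorithm never prematurely reports infeasibility on a feasible instance.
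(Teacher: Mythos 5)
Your proposal is correct and follows essentially the same route as the paper's proof: reduce completeness to the recursive enumeration of ${\cal O}_\Pi$ and ${\cal S}_\Pi$ from Theorem~\ref{thm:re}, conclude that the finitely many objects and static atoms supporting a solution $\pi_*$ appear in ${\cal O}$ and ${\cal S}$ after finitely many iterations, and then invoke soundness and completeness of \proc{s-plan}. The only difference is that you spell out the queue-fairness and depth-induction details that the paper compresses into the single assertion that the algorithm builds ${\cal O}$ and ${\cal S}$ ``in the same way'' as the enumeration procedure, merely batching \proc{next} calls in groups of $K$.
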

\begin{proof}
The \eager{} algorithm constructs ${\cal O}$ and ${\cal S}$ in the same way 
as theorem~\ref{thm:re} for ${\cal O}_\Pi$ and ${\cal S}_\Pi$ 
except that it 
calls \proc{next} in batches of $K$. 
Thus, any finite subsets of 
${\cal O}_\Pi$ and ${\cal S}_\Pi$ will be included in ${\cal O}$ and ${\cal S}$
after a finite number of iterations.
Let $\pi_*$ be a solution to a feasible \algname{} problem $\Pi$.
Consider the first iteration where ${\cal O}$ and ${\cal S}$ contain the set of objects used along $\pi_*$ and static atoms supporting $\pi_*$.
On that iteration, \proc{s-plan} will return some solution (if not $\pi_*$) in finite time because it is sound and complete.
\end{proof}

\begin{thm}
The \focused{} algorithm is complete.
\begin{proof}

Define an episode as the \focused{} algorithm iterations between the last reset (${\cal O}_t = {\cal S}_t = \beta_t = \emptyset$) and the next reset. 
Consider a minimum length solution $\pi_*$ to a feasible \algname{} problem $\Pi$. Let ${\cal O}_* \subseteq {\cal O}_\Pi$ be the set of objects used along $\pi_*$ and ${\cal S}_* \subseteq {\cal S}_\Pi$ be the set of static atoms supporting $\pi_*$.

For each episode, consider the following argument.
By theorem~\ref{thm:re}, there exists a sequence of stream instance calls which produces ${\cal O}_*$ and ${\cal S}_*$ from the current ${\cal O}$ and ${\cal S}$. Let $\Sigma_*$ be the minimum length sequence that satisfies this property.
$\Sigma_*$ may include the same stream instance several times if multiple calls are needed to produce the necessary values.
On each iteration, \proc{\focused{}} creates a finite \strips{} problem $\tilde{\Pi}$ by augmenting $\Pi$ with the abstract objects $\{\gamma_1, ..., \gamma_\theta\}$ and the stream operators $\bar{\Sigma}$. 
Because ${\cal O}_t$ and ${\cal S}_t$ are withheld, ${\cal O}$ and ${\cal S}$ are fixed for all iterations within the episode. 
Thus, a finite number of simple plans are solutions for $\tilde{\Pi}$.
One of these plans, $\tilde{\pi}_*$, is $\Sigma_*$ concatenated with $\pi_*$ where additionally any object $o \in ({\cal O}_* \setminus {\cal O})$ is replaced with some abstract object $\gamma$. Assume all redundant stream operators are removed from $\tilde{\pi}_*$.
The same $\gamma$ can stand in for several $o$ on $\tilde{\pi}_*$ at once.
Thus, \proc{\focused{}} will be complete for any $\theta \geq 1$.

We will show that at least one 
stream instance in $\Sigma_*$ will called performed during each episode.
On each iteration, \proc{s-plan} will identify a plan $\pi$. 
If $\pi$ does not involve any abstract objects and is fully supported, it is a solution. 
Otherwise, \proc{add-objects} will call each stream $\sigma(\bar{Y} \mid \bar{o}_x)$ associated with $\pi$. It adds $\id{Blocked}_\sigma(\bar{o}_x)$ to $\beta_t$, preventing $\pi$ and all other plans using $\sigma(\bar{Y} \mid \bar{o}_x)$ from being re-identified within this episode. 
If $\pi$ overlaps with $\tilde{\pi}_*$ and $\sigma(\bar{Y} \mid \bar{o}_x) \in \Sigma_*$, then the episode has succeeded.
Otherwise, this process repeats on the next iteration. 
Eventually a stream instance in $\Sigma_*$ will be called, or $\tilde{\pi}_*$ itself will be the only remaining unblocked plan for $\tilde{\Pi}$. In which case, \proc{s-plan} will return $\tilde{\pi}_*$, and \proc{add-objects} will call a stream instance in $\Sigma_*$.

$|\Sigma_*|$ strictly decreases after each episode. Inductively applying this, after a finite number of episodes, 
${\cal O}_* \subseteq {\cal O}$ and ${\cal S}_* \subseteq {\cal S}$. During the next episode, \proc{s-plan} will be guaranteed to return some solution (if not $\pi_*$).
\end{proof}
\end{thm}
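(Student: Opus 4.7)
The plan is to reason in terms of \emph{episodes}, where an episode is the sequence of iterations between two consecutive resets (consecutive assignments $\mathcal{O}_t = \mathcal{S}_t = \beta_t = \emptyset$). I will show that every episode makes a strictly quantifiable piece of progress toward generating the objects and static atoms needed for a fixed witness solution, so after finitely many episodes \proc{s-plan} returns a concrete plan.

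First I would fix a solution $\pi_*$ of the feasible problem $\Pi$, letting $\mathcal{O}_* \subseteq \mathcal{O}_\Pi$ be the objects appearing in $\pi_*$ and $\mathcal{S}_* \subseteq \mathcal{S}_\Pi$ the static atoms supporting it. By Theorem~\ref{thm:re} there is a finite sequence $\Sigma_*$ of stream-instance calls that, starting from the current $\mathcal{O}$ and $\mathcal{S}$, produces $\mathcal{O}_* \cup \mathcal{S}_*$; I would take $\Sigma_*$ of minimum length. Concatenating $\Sigma_*$ (viewed as stream operators, via \proc{tform-streams}) with $\pi_*$ and then replacing every object in $\mathcal{O}_* \setminus \mathcal{O}$ by a single abstract object $\gamma_1$ yields a candidate lifted plan $\tilde{\pi}_*$ for the transformed finite problem $\tilde{\Pi}$ that \proc{s-plan} faces inside each episode. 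The fact that the same $\gamma_1$ may legally stand in for many distinct future concretizations along $\tilde{\pi}_*$ is exactly why any $\theta \geq 1$ suffices.

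Next I would argue progress within a single episode. Because $\mathcal{O}$ and $\mathcal{S}$ are frozen during the episode (new objects are quarantined in $\mathcal{O}_t$, $\mathcal{S}_t$), the set of simple plans of $\tilde{\Pi}$ is finite and does not change from one iteration to the next. Whenever \proc{s-plan} returns a plan $\pi$ containing abstract objects, \proc{add-objects} invokes every stream instance mentioned by $\pi$ and records $\id{Blocked}_\sigma(\bar{o}_x) \in \beta_t$ for each, so $\pi$ and every plan sharing those stream-operator instances is excluded for the rest of the episode. Hence the episode either returns a fully grounded plan (done) or, after finitely many iterations, has blocked every plan except $\tilde{\pi}_*$; at that point \proc{s-plan} returns $\tilde{\pi}_*$ and \proc{add-objects} executes at least one call from $\Sigma_*$. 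The produced objects and atoms land in $\mathcal{O}_t$, $\mathcal{S}_t$ and are flushed into $\mathcal{O}$, $\mathcal{S}$ at the next reset, so $|\Sigma_*|$ (recomputed with respect to the new $\mathcal{O}$, $\mathcal{S}$) strictly decreases from episode to episode. By induction, after finitely many episodes $\mathcal{O}_* \subseteq \mathcal{O}$ and $\mathcal{S}_* \subseteq \mathcal{S}$, and the next \proc{s-plan} call returns a fully grounded plan.

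The main obstacle is the within-episode progress step: I must verify that the temporary-blocking mechanism exhausts the finite space of simple plans in such a way that $\tilde{\pi}_*$ is actually reached, and that the infeasibility branch is never prematurely triggered in a feasible problem. The latter is handled by noting that the branch fires only when \proc{s-plan} returns $\kw{None}$ together with $\mathcal{O}_t = \mathcal{S}_t = \beta_t = \emptyset$; as long as even one plan has been tried within the episode, $\beta_t$ is non-empty and the algorithm merely resets. A secondary subtlety is the treatment of exhausted stream instances that get pushed into $\beta_p$: these are harmless for the induction because any stream instance permanently blocked this way produced no tuples and therefore cannot appear in the minimum-length $\Sigma_*$ built from the current state. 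Once these bookkeeping points are cleared, the two-level induction on iterations within an episode and on episodes across resets closes the argument.
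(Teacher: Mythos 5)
Your proposal follows essentially the same argument as the paper's proof: fixing a witness solution $\pi_*$, constructing the minimum-length stream-call sequence $\Sigma_*$ and the lifted plan $\tilde{\pi}_*$, showing each episode's blocking mechanism forces at least one call from $\Sigma_*$, and inducting on the strictly decreasing $|\Sigma_*|$. Your additional remarks on why the infeasibility branch cannot fire prematurely and why permanently blocked (exhausted) streams cannot belong to $\Sigma_*$ are correct bookkeeping points that the paper leaves implicit.
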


%

\subsection{Python example discrete domain}

Figure~\ref{fig:python} gives a complete encoding of the example discrete domain and a problem instance within it using our Python implementation of \algname{}. In the specified problem instance, the initial state consists of three blocks placed in a row. The goal is to shift each of the blocks over one pose. The Python syntax of \algname{} intentionally resembles the Planning Domain Definition Language (\pddl{})~\cite{mcdermott1998pddl}. We use several common features of \pddl{} that extend \strips{}. The resulting encoding is equivalent to previously described \strips{} formulation but is more compact. 
We use object types \proc{BLOCK}, \proc{POSE}, \proc{CONF} instead of static predicates \id{IsBlock}, \id{IsPose}, and \id{IsConf}.
Additionally, we use several Action Description Language ({\sc ADL}) logical operations including \proc{Or}, \proc{Equal}, \proc{ForAll}, and \proc{Exists}. The universal quantifier (\proc{ForAll}) is over \proc{BLOCK}, a finite type, and thus is a finite conjunction. 

\begin{figure*}
\begin{footnotesize}
\begin{Verbatim}[commandchars=\\\{\}]
\PYG{k+kn}{from} \PYG{n+nn}{stripstream} \PYG{k+kn}{import} \PYG{n}{Type}\PYG{p}{,} \PYG{n}{Param}\PYG{p}{,} \PYG{n}{Pred}\PYG{p}{,} \PYG{n}{Not}\PYG{p}{,} \PYG{n}{Or}\PYG{p}{,} \PYG{n}{And}\PYG{p}{,} \PYG{n}{Equal}\PYG{p}{,} \PYG{n}{Exists}\PYG{p}{,} \PYGZbs{}
  \PYG{n}{ForAll}\PYG{p}{,} \PYG{n}{Action}\PYG{p}{,} \PYG{n}{Axiom}\PYG{p}{,} \PYG{n}{GeneratorStream}\PYG{p}{,} \PYG{n}{TestStream}\PYG{p}{,} \PYG{n}{STRIPStreamProblem}

\PYG{n}{blocks} \PYG{o}{=} \PYG{p}{[}\PYG{l+s+s1}{\PYGZsq{}block}\PYG{l+s+si}{\PYGZpc{}i}\PYG{l+s+s1}{\PYGZsq{}}\PYG{o}{\PYGZpc{}}\PYG{n}{i} \PYG{k}{for} \PYG{n}{i} \PYG{o+ow}{in} \PYG{n+nb}{range}\PYG{p}{(}\PYG{l+m+mi}{3}\PYG{p}{)]}
\PYG{n}{num\PYGZus{}poses} \PYG{o}{=} \PYG{n+nb}{pow}\PYG{p}{(}\PYG{l+m+mi}{10}\PYG{p}{,} \PYG{l+m+mi}{10}\PYG{p}{)} \PYG{c+c1}{\PYGZsh{} a very large number of poses}
\PYG{n}{initial\PYGZus{}config} \PYG{o}{=} \PYG{l+m+mi}{0} \PYG{c+c1}{\PYGZsh{} initial robot configuration is 0}
\PYG{n}{initial\PYGZus{}poses} \PYG{o}{=} \PYG{p}{\PYGZob{}}\PYG{n}{block}\PYG{p}{:} \PYG{n}{i} \PYG{k}{for} \PYG{n}{i}\PYG{p}{,} \PYG{n}{block} \PYG{o+ow}{in} \PYG{n+nb}{enumerate}\PYG{p}{(}\PYG{n}{blocks}\PYG{p}{)\PYGZcb{}} \PYG{c+c1}{\PYGZsh{} initial pose for block i is i}
\PYG{n}{goal\PYGZus{}poses} \PYG{o}{=} \PYG{p}{\PYGZob{}}\PYG{n}{block}\PYG{p}{:} \PYG{n}{i}\PYG{o}{+}\PYG{l+m+mi}{1} \PYG{k}{for} \PYG{n}{i}\PYG{p}{,} \PYG{n}{block} \PYG{o+ow}{in} \PYG{n+nb}{enumerate}\PYG{p}{(}\PYG{n}{blocks}\PYG{p}{)\PYGZcb{}} \PYG{c+c1}{\PYGZsh{} goal pose for block i is i+1}

\PYG{n}{BLOCK}\PYG{p}{,} \PYG{n}{POSE}\PYG{p}{,} \PYG{n}{CONF} \PYG{o}{=} \PYG{n}{Type}\PYG{p}{(),} \PYG{n}{Type}\PYG{p}{(),} \PYG{n}{Type}\PYG{p}{()} \PYG{c+c1}{\PYGZsh{} Object types}
\PYG{n}{B1}\PYG{p}{,} \PYG{n}{B2} \PYG{o}{=} \PYG{n}{Param}\PYG{p}{(}\PYG{n}{BLOCK}\PYG{p}{),} \PYG{n}{Param}\PYG{p}{(}\PYG{n}{BLOCK}\PYG{p}{)} \PYG{c+c1}{\PYGZsh{} Free parameters}
\PYG{n}{P1}\PYG{p}{,} \PYG{n}{P2} \PYG{o}{=} \PYG{n}{Param}\PYG{p}{(}\PYG{n}{POSE}\PYG{p}{),} \PYG{n}{Param}\PYG{p}{(}\PYG{n}{POSE}\PYG{p}{)}
\PYG{n}{Q1}\PYG{p}{,} \PYG{n}{Q2} \PYG{o}{=} \PYG{n}{Param}\PYG{p}{(}\PYG{n}{CONF}\PYG{p}{),} \PYG{n}{Param}\PYG{p}{(}\PYG{n}{CONF}\PYG{p}{)}

\PYG{n}{AtConf} \PYG{o}{=} \PYG{n}{Pred}\PYG{p}{(}\PYG{n}{CONF}\PYG{p}{)} \PYG{c+c1}{\PYGZsh{} Fluent predicates}
\PYG{n}{AtPose} \PYG{o}{=} \PYG{n}{Pred}\PYG{p}{(}\PYG{n}{BLOCK}\PYG{p}{,} \PYG{n}{POSE}\PYG{p}{)}
\PYG{n}{HandEmpty} \PYG{o}{=} \PYG{n}{Pred}\PYG{p}{()}
\PYG{n}{Holding} \PYG{o}{=} \PYG{n}{Pred}\PYG{p}{(}\PYG{n}{BLOCK}\PYG{p}{)}
\PYG{n}{Safe} \PYG{o}{=} \PYG{n}{Pred}\PYG{p}{(}\PYG{n}{BLOCK}\PYG{p}{,} \PYG{n}{BLOCK}\PYG{p}{,} \PYG{n}{POSE}\PYG{p}{)} \PYG{c+c1}{\PYGZsh{} Derived predicates}
\PYG{n}{IsKin} \PYG{o}{=} \PYG{n}{Pred}\PYG{p}{(}\PYG{n}{POSE}\PYG{p}{,} \PYG{n}{CONF}\PYG{p}{)} \PYG{c+c1}{\PYGZsh{} Static predicates}
\PYG{n}{IsCollisionFree} \PYG{o}{=} \PYG{n}{Pred}\PYG{p}{(}\PYG{n}{BLOCK}\PYG{p}{,} \PYG{n}{POSE}\PYG{p}{,} \PYG{n}{BLOCK}\PYG{p}{,} \PYG{n}{POSE}\PYG{p}{)}

\PYG{n}{actions} \PYG{o}{=} \PYG{p}{[}
  \PYG{n}{Action}\PYG{p}{(}\PYG{n}{name}\PYG{o}{=}\PYG{l+s+s1}{\PYGZsq{}pick\PYGZsq{}}\PYG{p}{,} \PYG{n}{parameters}\PYG{o}{=}\PYG{p}{[}\PYG{n}{B1}\PYG{p}{,} \PYG{n}{P1}\PYG{p}{,} \PYG{n}{Q1}\PYG{p}{],}
    \PYG{n}{condition}\PYG{o}{=}\PYG{n}{And}\PYG{p}{(}\PYG{n}{AtPose}\PYG{p}{(}\PYG{n}{B1}\PYG{p}{,} \PYG{n}{P1}\PYG{p}{),} \PYG{n}{HandEmpty}\PYG{p}{(),} \PYG{n}{AtConf}\PYG{p}{(}\PYG{n}{Q1}\PYG{p}{),} \PYG{n}{IsKin}\PYG{p}{(}\PYG{n}{P1}\PYG{p}{,} \PYG{n}{Q1}\PYG{p}{)),}
    \PYG{n}{effect}\PYG{o}{=}\PYG{n}{And}\PYG{p}{(}\PYG{n}{Holding}\PYG{p}{(}\PYG{n}{B1}\PYG{p}{),} \PYG{n}{Not}\PYG{p}{(}\PYG{n}{AtPose}\PYG{p}{(}\PYG{n}{B1}\PYG{p}{,} \PYG{n}{P1}\PYG{p}{)),} \PYG{n}{Not}\PYG{p}{(}\PYG{n}{HandEmpty}\PYG{p}{()))),}
  \PYG{n}{Action}\PYG{p}{(}\PYG{n}{name}\PYG{o}{=}\PYG{l+s+s1}{\PYGZsq{}place\PYGZsq{}}\PYG{p}{,} \PYG{n}{parameters}\PYG{o}{=}\PYG{p}{[}\PYG{n}{B1}\PYG{p}{,} \PYG{n}{P1}\PYG{p}{,} \PYG{n}{Q1}\PYG{p}{],}
    \PYG{n}{condition}\PYG{o}{=}\PYG{n}{And}\PYG{p}{(}\PYG{n}{Holding}\PYG{p}{(}\PYG{n}{B1}\PYG{p}{),} \PYG{n}{AtConf}\PYG{p}{(}\PYG{n}{Q1}\PYG{p}{),} \PYG{n}{IsKin}\PYG{p}{(}\PYG{n}{P1}\PYG{p}{,} \PYG{n}{Q1}\PYG{p}{),}
      \PYG{n}{ForAll}\PYG{p}{([}\PYG{n}{B2}\PYG{p}{],} \PYG{n}{Or}\PYG{p}{(}\PYG{n}{Equal}\PYG{p}{(}\PYG{n}{B1}\PYG{p}{,} \PYG{n}{B2}\PYG{p}{),} \PYG{n}{Safe}\PYG{p}{(}\PYG{n}{B2}\PYG{p}{,} \PYG{n}{B1}\PYG{p}{,} \PYG{n}{P1}\PYG{p}{)))),}
    \PYG{n}{effect}\PYG{o}{=}\PYG{n}{And}\PYG{p}{(}\PYG{n}{AtPose}\PYG{p}{(}\PYG{n}{B1}\PYG{p}{,} \PYG{n}{P1}\PYG{p}{),} \PYG{n}{HandEmpty}\PYG{p}{(),} \PYG{n}{Not}\PYG{p}{(}\PYG{n}{Holding}\PYG{p}{(}\PYG{n}{B1}\PYG{p}{)))),}
  \PYG{n}{Action}\PYG{p}{(}\PYG{n}{name}\PYG{o}{=}\PYG{l+s+s1}{\PYGZsq{}move\PYGZsq{}}\PYG{p}{,} \PYG{n}{parameters}\PYG{o}{=}\PYG{p}{[}\PYG{n}{Q1}\PYG{p}{,} \PYG{n}{Q2}\PYG{p}{],}
    \PYG{n}{condition}\PYG{o}{=}\PYG{n}{AtConf}\PYG{p}{(}\PYG{n}{Q1}\PYG{p}{),}
    \PYG{n}{effect}\PYG{o}{=}\PYG{n}{And}\PYG{p}{(}\PYG{n}{AtConf}\PYG{p}{(}\PYG{n}{Q2}\PYG{p}{),} \PYG{n}{Not}\PYG{p}{(}\PYG{n}{AtConf}\PYG{p}{(}\PYG{n}{Q1}\PYG{p}{))))]}
\PYG{n}{axioms} \PYG{o}{=} \PYG{p}{[}
  \PYG{n}{Axiom}\PYG{p}{(}\PYG{n}{effect}\PYG{o}{=}\PYG{n}{Safe}\PYG{p}{(}\PYG{n}{B2}\PYG{p}{,} \PYG{n}{B1}\PYG{p}{,} \PYG{n}{P1}\PYG{p}{),} \PYG{c+c1}{\PYGZsh{} Infers B2 is at a safe pose wrt B1 at P1}
        \PYG{n}{condition}\PYG{o}{=}\PYG{n}{Exists}\PYG{p}{([}\PYG{n}{P2}\PYG{p}{],} \PYG{n}{And}\PYG{p}{(}\PYG{n}{AtPose}\PYG{p}{(}\PYG{n}{B2}\PYG{p}{,} \PYG{n}{P2}\PYG{p}{),} \PYG{n}{IsCollisionFree}\PYG{p}{(}\PYG{n}{B1}\PYG{p}{,} \PYG{n}{P1}\PYG{p}{,} \PYG{n}{B2}\PYG{p}{,} \PYG{n}{P2}\PYG{p}{))))]}

\PYG{n}{cond\PYGZus{}streams} \PYG{o}{=} \PYG{p}{[}
  \PYG{n}{GeneratorStream}\PYG{p}{(}\PYG{n}{inputs}\PYG{o}{=}\PYG{p}{[],} \PYG{n}{outputs}\PYG{o}{=}\PYG{p}{[}\PYG{n}{P1}\PYG{p}{],} \PYG{n}{conditions}\PYG{o}{=}\PYG{p}{[],} \PYG{n}{effects}\PYG{o}{=}\PYG{p}{[],}
                  \PYG{n}{generator}\PYG{o}{=}\PYG{k}{lambda}\PYG{p}{:} \PYG{n+nb}{xrange}\PYG{p}{(}\PYG{n}{num\PYGZus{}poses}\PYG{p}{)),} \PYG{c+c1}{\PYGZsh{} Enumerates all the poses}
  \PYG{n}{GeneratorStream}\PYG{p}{(}\PYG{n}{inputs}\PYG{o}{=}\PYG{p}{[}\PYG{n}{P1}\PYG{p}{],} \PYG{n}{outputs}\PYG{o}{=}\PYG{p}{[}\PYG{n}{Q1}\PYG{p}{],} \PYG{n}{conditions}\PYG{o}{=}\PYG{p}{[],} \PYG{n}{effects}\PYG{o}{=}\PYG{p}{[}\PYG{n}{IsKin}\PYG{p}{(}\PYG{n}{P1}\PYG{p}{,} \PYG{n}{Q1}\PYG{p}{)],}
                  \PYG{n}{generator}\PYG{o}{=}\PYG{k}{lambda} \PYG{n}{p}\PYG{p}{:} \PYG{p}{[}\PYG{n}{p}\PYG{p}{]),} \PYG{c+c1}{\PYGZsh{} Inverse kinematics}
  \PYG{n}{TestStream}\PYG{p}{(}\PYG{n}{inputs}\PYG{o}{=}\PYG{p}{[}\PYG{n}{B1}\PYG{p}{,} \PYG{n}{P1}\PYG{p}{,} \PYG{n}{B2}\PYG{p}{,} \PYG{n}{P2}\PYG{p}{],} \PYG{n}{conditions}\PYG{o}{=}\PYG{p}{[],} \PYG{n}{effects}\PYG{o}{=}\PYG{p}{[}\PYG{n}{IsCollisionFree}\PYG{p}{(}\PYG{n}{B1}\PYG{p}{,} \PYG{n}{P1}\PYG{p}{,} \PYG{n}{B2}\PYG{p}{,} \PYG{n}{P2}\PYG{p}{)],}
             \PYG{n}{test}\PYG{o}{=}\PYG{k}{lambda} \PYG{n}{b1}\PYG{p}{,} \PYG{n}{p1}\PYG{p}{,} \PYG{n}{b2}\PYG{p}{,} \PYG{n}{p2}\PYG{p}{:} \PYG{n}{p1} \PYG{o}{!=} \PYG{n}{p2}\PYG{p}{)]} \PYG{c+c1}{\PYGZsh{} Collision checking}

\PYG{n}{constants} \PYG{o}{=} \PYG{p}{[]}
\PYG{n}{initial\PYGZus{}atoms} \PYG{o}{=} \PYG{p}{[}\PYG{n}{AtConf}\PYG{p}{(}\PYG{n}{initial\PYGZus{}config}\PYG{p}{),} \PYG{n}{HandEmpty}\PYG{p}{()]} \PYG{o}{+} \PYGZbs{}
                \PYG{p}{[}\PYG{n}{AtPose}\PYG{p}{(}\PYG{n}{block}\PYG{p}{,} \PYG{n}{pose}\PYG{p}{)} \PYG{k}{for} \PYG{n}{block}\PYG{p}{,} \PYG{n}{pose} \PYG{o+ow}{in} \PYG{n}{initial\PYGZus{}poses}\PYG{o}{.}\PYG{n}{iteritems}\PYG{p}{()]}
\PYG{n}{goal\PYGZus{}formula} \PYG{o}{=} \PYG{n}{And}\PYG{p}{(}\PYG{n}{AtPose}\PYG{p}{(}\PYG{n}{block}\PYG{p}{,} \PYG{n}{pose}\PYG{p}{)} \PYG{k}{for} \PYG{n}{block}\PYG{p}{,} \PYG{n}{pose} \PYG{o+ow}{in} \PYG{n}{goal\PYGZus{}poses}\PYG{o}{.}\PYG{n}{iteritems}\PYG{p}{())}
\PYG{k}{return} \PYG{n}{STRIPStreamProblem}\PYG{p}{(}\PYG{n}{initial\PYGZus{}atoms}\PYG{p}{,} \PYG{n}{goal\PYGZus{}formula}\PYG{p}{,} \PYG{n}{actions}\PYG{o}{+}\PYG{n}{axioms}\PYG{p}{,} \PYG{n}{cond\PYGZus{}streams}\PYG{p}{,} \PYG{n}{constants}\PYG{p}{)}
\end{Verbatim}


%
%
%
%
%
%
\end{footnotesize}
\caption{\algname{} Python code for the example discrete domain} \label{fig:python}
\end{figure*}

\end{document}